\DeclarePairedDelimiter{\norm}{\lVert}{\rVert}
\newcommand*{\Scale}[2][4]{\scalebox{#1}{$#2$}}%
\setlist[tablenotes]{label=\tnote{\alph*},ref=\alph*,itemsep=\z@,topsep=\z@skip,partopsep=\z@skip,parsep=\z@,itemindent=\z@,labelindent=\tabcolsep,labelsep=.2em,leftmargin=*,align=left,before={\footnotesize}}
\DeclareRobustCommand{\qed}{%
  \ifmmode 
  \else \leavevmode\unskip\penalty9999 \hbox{}\nobreak\hfill
  \fi
  \quad\hbox{\qedsymbol}}
\newcommand{\openbox}{\leavevmode
  \hbox to.77778em{%
  \hfil\vrule
  \vbox to.675em{\hrule width.6em\vfil\hrule}%
  \vrule\hfil}}
\newcommand{\qedsymbol}{\openbox}
\newenvironment{proof}[1][\proofname]{\par
  \normalfont
  \topsep6\p@\@plus6\p@ \trivlist
  \item[\hskip\labelsep\itshape
    #1.]\ignorespaces
}{%
  \qed\endtrivlist
}
\newcommand{\proofname}{Proof}
\let\Ginclude@graphics\@org@Ginclude@graphics 
\title[Calibrated Adversarial Training]{Calibrated Adversarial Training}
  \author{\Name{Tianjin Huang}$^1$ \Email{t.huang@tue.nl}\\
  \Name{Vlado Menkovski}$^1$ \Email{v.menkovski@tue.nl}\\
  \Name{Yulong Pei}$^1$    \Email{y.pei.1@tue.nl}\\
  \Name{Mykola Pechenizkiy}$^{1,2}$ \Email{m.pechenizkiy@tue.nl}\\
  \addr $^1$ Department of Mathematics and Computer Science, Eindhoven University of Technology, 5600 MB Eindhoven, the Netherland\\
  \addr $^2$ Faculty of Information Technology, University of Jyväskylä, 40100 Jyväskylä, Finland
 }
\begin{document}

\maketitle

\begin{abstract}
Adversarial training is an approach of increasing the robustness of models to adversarial attacks by including adversarial examples in the training set. One major challenge of producing adversarial examples is to contain sufficient perturbation in the example to flip the model's output while not making severe changes in the example's semantical content. Exuberant change in the semantical content could also change the true label of the example. Adding such examples to the training set results in adverse effects. In this paper, we present the Calibrated Adversarial Training, a method that reduces the adverse effects of semantic perturbations in adversarial training. The method produces pixel-level adaptations to the perturbations based on novel calibrated robust error. We provide theoretical analysis on the calibrated robust error and derive an upper bound for it. Our empirical results show a superior performance of the Calibrated Adversarial Training over a number of public datasets.
\end{abstract}
\begin{keywords}
Adversarial training; Adversarial examples; Generalization
\end{keywords}

\section{Introduction}
\label{intro}
Despite the impressive success in multiple tasks, e.g. image classification~\cite{Krizhevsky2012,He2016},  object detection~\cite{Girshick2014}, semantic segmentation~\cite{Long2015}, deep neural networks (DNNs) are vulnerable to adversarial examples. In other words, carefully constructed small perturbations of the input can change the prediction of the model drastically~\cite{Szegedy2013,Goodfellow2014}. Furthermore, these adversarial examples have been shown high transferability, which greatly threat the security of DNN models~\cite{xie2019improving,huang2021direction}. This vulnerability of DNNs prohibits their adoption in applications with high risk such as autonomous driving, face recognition, medical image diagnosis.

In response to the vulnerability of DNNs, various defense methods have been proposed. These methods can be roughly separated into two categories: 1) certified defense, and 2) empirical defense. Certified defense tries to learn provable robustness against $\epsilon$-ball bounded perturbations~\cite{cohen2019certified,wong2018provable}. Empirical defense refers to heuristic methods, including augmenting training data~\cite{Madry2017} (e.g. adversarial training), regularization~\cite{Moosavi-Dezfooli2018,jakubovitz2018improving}, and inspirations from biology~\cite{tadros2019biologically}. Among all these defense methods, adversarial training has been the most commonly used defense against adversarial perturbations because of its simplicity and effectiveness~\cite{Madry2017,athalye2018obfuscated}. Standard adversarial training takes model training as a \emph{minmax} optimization problem (Section~\ref{sat})~\cite{Madry2017}. It trains a model based on on-the-fly generated adversarial examples $X'$  bounded by uniformly $\epsilon$-ball of input X (i.e.\ $\norm{X'-X} \leq \epsilon$). 

Although adversarial training is effective in achieving robustness, it suffers from two problems. Firstly, it achieves robustness with a severe sacrifice on natural accuracy, i.e.\ accuracy on natural images. Furthermore, the sacrifice will be enlarged rapidly when training with larger $\epsilon$. Secondly, there is an underlying assumption  that the on-the-fly generated adversarial examples within $\epsilon$-ball are semantic unchanged. However, recently,~\cite{guo2018low} and~\cite{sharma2019effectiveness} show that adversarial examples bounded by $\epsilon$-ball could be perceptible in some instances.~\cite{tramer2020fundamental} and ~\cite{jacobsen2018excessive} find that there are ``invariance adversarial examples'' for some instances, where ``invariance adversarial examples'' refer to those adversarial examples that model's prediction does not change while the true label  changes. All these findings indicate that this assumption does not consistently hold, which hurts the performance of the model.

In this paper, we first analyze the limitation for adversarial training and point out that some on-the-fly generated adversarial examples may be harmful for training models. For instance, in Figure~\ref{fig:1}, the adversarial examples for $x_{1}$ may be harmful since it crosses the oracle classifier's decision boundary. To address the limitation, we propose a calibrated adversarial training, which is derived on the upper bound of a new definition of robust error (Calibrated robust error). Calibrated adversarial training is composed of weighted cross-entropy loss for natural input and $\mathbf{KL}$ divergence for calibrated adversarial examples where calibrated adversarial examples are pixel-level adapted adversarial examples in order to reduce the adverse effect of adversarial examples with underlying semantic changes.  

Specifically, our contributions are summarized as follows:
\begin{itemize}
    \item Theoretically, we analyze the limitation for adversarial training, and propose a new definition of robust error: Calibrated robust error. Furthermore, we derive an upper bound for the calibrated robust error.
    \item We propose the calibrated adversarial training based on the upper bound of calibrated robust error, which can reduce the adverse effect of adversarial examples.  
    \item Extensive experiments demonstrate that our method achieves the best performance on both natural and robust accuracy among baselines and provides a good trade-off between natural accuracy and robust accuracy. Furthermore,
    it enables training with larger perturbations, which yields higher adversarial robustness.
\end{itemize}

\begin{figure}[htb]
\centering
\vspace{-0.2in}
\includegraphics[width=0.6\textwidth]{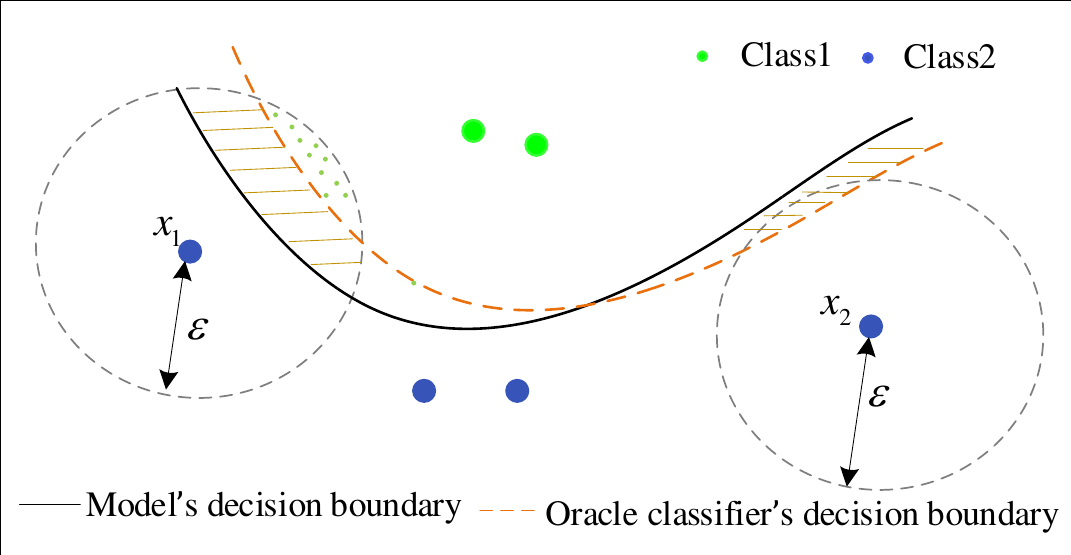}
\caption{Illustration for neighborhoods of inputs and the decision boundaries.}
\label{fig:1}
\end{figure}

\section{Related Work}
Many papers have proposed their variants of adversarial training for achieving either more effective adversarial robustness or a better trade-off between adversarial robustness and natural accuracy. Generally, they can be categorized into two groups. The first group is to adapt a loss function for outer minimization or inner maximization. For instance,~\cite{kannan2018adversarial} introduces a regularization term to enclose the distance between adversarial example and corresponding natural example.~\cite{pmlr-v97-zhang19p} proposes a theoretically principled trade-off method (Trades).~\cite{ding2019mma} proposes Max-Margin adversarial (MMA) training by maximizing the margin of a classifier.~\cite{wang2019improving} proposes MART by introducing an explicit regularization for misclassified examples.~\cite{wu2020adversarial} proposes Adversarial Weight Perturbation (AWP) for regularizing the weight loss landscape of adversarial training. 
~\cite{andriushchenko2020understanding,huang2020bridging} propose FGSM adversarial training + gradient-based regularization for achieving more effective adversarial robustness. The other group is to generate adversarial examples with adapted perturbation strength. Our work belongs to this group. Several recent works including Customized adversarial training~\cite{cheng2020cat}, Currium adversarial training~\cite{cai2018curriculum}, Dynamic adversarial training~\cite{wang2019convergence}, Instance adapted adversarial training~\cite{balaji2019instance}, Adversarial training with early stopping (ATES)~\cite{sitawarin2020improving}, Friendly adversarial training (FAT)~\cite{pmlr-v119-zhang20z}, heuristically propose to adapt $\epsilon$ in instance-level for adversarial examples.

\section{Preliminary}
\label{preliminary}
\subsection{Notations}
We denote capital letters such as $X$ and $Y$ to represent random variables and lower-case letters such as $x$ and $y$ to represent realization of random variables. We denote by $x \in \mathcal{X}$ the sample instance, and by $y \in \mathcal{Y}$ the label, where $\mathcal{X}\in \mathbb{R}^{m\times n}$ indicates the instance space. We use $\mathcal{B}(x,\epsilon)$ to represent the neighborhood of instance $x$: $\{x':\norm{x' -x}_p \leq \epsilon \}$. We denote a neural network classifier as $f_{\theta}(x)$, the cross-entropy loss as $L(\cdot)$ and Kullback-Leibler divergence as $\mathbf{KL}(\cdot||\cdot)$. We denote $P(Y|X)$ as probability output after softmax and $P(Y=y|X)$ as the probability of $Y=y$. $sgn(\cdot)$ denotes the sign function and  $f_{oracle}$ denotes the oracle classifier that maps any inputs to correct labels.

\subsection{Standard Adversarial Training} \label{sat}
Given a set of instance $x \in \mathcal{X}$ and $y \in \mathcal{Y}$. We assume the data are sampled from an unknown distribution $(X,Y) \sim \mathcal{D}$. The standard adversarial training can be formally expressed as follows~\cite{Madry2017}:
\begin{align}
\small
 \min_{\theta}\rho(\theta),  \rho(\theta)=\mathbbm{E}_{(X,Y)\sim D}[\max_{X' \in \mathcal{B}(X,\epsilon)} L(f_{\theta}(X'),Y)].
 \label{eq:at}
\end{align}

\subsection{Projected Gradient Descent (PGD)}
\cite{Madry2017} utilizes projected gradient to generate perturbations. Formally, with the initialization $x^{0}=x$, the perturbed data in $t$-th step $x^{t}$ can be expressed as follows:
\begin{equation}
x^{t}=\Pi_{\mathcal{B}(x,\epsilon)}(x^{t-1}+\alpha\cdot sgn(\nabla_x L(f_{\theta}(x^{t-1}),y))) ,
\label{pgd}
\end{equation}
where $\Pi_{\mathcal{B}(x,\epsilon)}$ denotes projecting perturbations into the set $\mathcal{B}(x,\epsilon)$, $\alpha$ is the step size and $t \in \{1,2,...,T\}$. We denote PGD attack with $T=20$ as PGD-20 and $T=100$ as PGD-100. 

\subsection{C\&W attack}
Given $x$, C\&W attack~\cite{carlini2017towards} searches adversarial examples $\Tilde{x}$ by optimizing the following objective function:
\begin{align}
     \norm{\Tilde{x}-x}_{p}+c \cdot h(\Tilde{x}),
\end{align}
with
\begin{align}
h(\Tilde{x})= \max(\max_{i\ne t}f_{\theta}(\Tilde{x})_{i}-f_{\theta}(\Tilde{x})_{t},-k), \notag
\end{align}
 where $c>0$ balances the two loss terms and $k$ encourages adversarial examples to be classified as target $t$ with larger confidence. This paper adopts C\&W$_{\infty}$ attack and follows the implementation in \cite{pmlr-v97-zhang19p,cai2018curriculum} where they replace cross-entropy loss with $h(\Tilde{x})$ in PGD attack.

\subsection{Robust Error} \label{robusterr}
We introduce the definition of robust error given by ~\cite{pmlr-v97-zhang19p,schmidt2018adversarially}.
\begin{restatable}[Robust Error~\cite{pmlr-v97-zhang19p,schmidt2018adversarially}]{definition}{defre}
Given a set of instance $x_1,...,x_n \in \mathcal{X}$ and labels $y_1,...,y_n \in \{-1,+1\}$. We assume that the data are sampled from an unknown distribution $(X,Y) \sim D$. The robust error of a classifer $f_{\theta}:\mathcal{X} \xrightarrow{} \mathbf{R}$ is defined as: $\mathcal{R}_{rob}(f):=\mathbf{E}_{(X,Y) \sim D} \bm{1} \{\exists X' \in \mathcal{B}(X,\epsilon)\;s.t. \; f_{\theta}(X')Y \leq 0 \} $.
\end{restatable}

\section{Method}\label{methods}

\subsection{Analysis For Adversarial Training}\label{AAT}

Current adversarial training including its variants trains a model by minimizing robust error directly, which may hurt the performance of the model. Taking standard adversarial training as an example, it firstly approximates robust error by the inner maximization and then minimizes the approximated robust error. However, the on-the-fly adversarial examples generated by the inner maximization could be semantically damaged for some instances, e.g., in Figure~\ref{fig:1}, the semantical content of the adversarial examples for $x_{1}$ could be damaged since it crosses the decision boundary of $f_{\theta}$. Therefore, the objective function (Eq.~\ref{eq:at}) can be decomposed into two terms according to the oracle classifier's decision boundary:
\begin{align}
\small
 &\min_{\theta}\rho(\theta),\  \rho(\theta) = \mathbbm{E}_{(X,Y)\sim D}[ \overbrace{\max_{\delta \in \mathcal{B}(X,\epsilon)} L(f_{\theta}(X+\delta),Y) \mathbf{1}\{f_{oracle}(X+\delta)=Y\}}^{(a)} \notag \\
 &+\overbrace{\max_{\delta \in \mathcal{B}(X,\epsilon)} L(f_{\theta}(X+\delta),Y) \mathbf{1}\{f_{oracle}(X+\delta) \ne Y\}}^{(b)}].
 \label{eq:at_deco}
\end{align}
The term (b) contributes to negative effects since the cross-entropy loss takes $Y$ as the label of adversarial examples $X+\delta$ while the true label of $X+\delta$ is not $Y$. This term is equivalent to bringing noisy labels in training data, which also explains why a large perturbation magnitude in adversarial training setting will lead to a severe drop in natural accuracy of model.

To address this drawback, we propose calibrated robust error and build our defense method based on it.

\subsection{Calibrated Robust Error}\label{CaliRE}

\begin{restatable}[Calibrated Robust Error (Ours)]{definition}{defcre}
Given a set of instances $x_1,...,x_n \in \mathcal{X}$ and labels $y_1,...,y_n \in \{-1,+1\}$. We assume that the data are sampled from an unknown distribution $(X,Y) \sim D$. Assume there is an oracle classifier $f_{oracle}$ that maps any input $x \in \mathbf{R}^d $ into its true label.  The calibrated robust error of a classifier $f_{\theta}:\mathcal{X} \xrightarrow{} \mathbf{R}$ is defined as: $\mathcal{R}_{cali}(f):=\mathbf{E}_{(X,Y) \sim D} \bm{1} \{\exists X' \in \mathcal{B}(X,\epsilon)\; s.t. \; f_{\theta}(X')f_{oracle}(X') \leq 0 \} $.
\end{restatable}

\begin{restatable}[]{thm}{theorencaliRE}\label{theo_calire}
Given a set of instance $x_1,...,x_n \in \mathcal{X}$, a classifier $f_{\theta}:\mathcal{X} \xrightarrow{} \mathbf{R}$ and an oracle classifier $f_{oracle}$ that maps any input $x \in \mathbf{R}^d$ into its true label and assumed the decision boundaries of $f_{\theta}$ and $f_{oracle}$ are not overlapped~\footnote{Not overlapped denotes $f_{\theta}$ and $f_{oracle}$ are not exactly the same.}, we have:
\begin{align}
  \mathcal{R}_{rob}(f) \leq \mathcal{R}_{cali}(f).
\end{align}
\end{restatable}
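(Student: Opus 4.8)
The plan is to reduce the claimed inequality between the two expected errors to a pointwise domination of the indicator functions inside them, and then to establish that domination by a connectedness argument on the ball $\mathcal{B}(X,\epsilon)$. Since both $\mathcal{R}_{rob}(f)$ and $\mathcal{R}_{cali}(f)$ are expectations of $\{0,1\}$-valued indicators under the same distribution $D$, monotonicity of expectation shows it suffices to prove, for $D$-almost every pair $(X,Y)$ (where $f_{oracle}(X)=Y$), the implication: if $\exists X'\in\mathcal{B}(X,\epsilon)$ with $f_\theta(X')Y\le 0$, then $\exists X''\in\mathcal{B}(X,\epsilon)$ with $f_\theta(X'')f_{oracle}(X'')\le 0$.

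I would prove this implication by contraposition, which is cleaner because it avoids tracking individual adversarial witnesses. Assume there is no calibrated error at $(X,Y)$, i.e.\ $f_\theta(X')f_{oracle}(X')>0$ for every $X'\in\mathcal{B}(X,\epsilon)$. Because $f_{oracle}$ only takes values in $\{-1,+1\}$, this strict positivity forces $f_\theta(X')\neq 0$ throughout the ball. The ball $\mathcal{B}(X,\epsilon)=\{X':\norm{X'-X}_p\le\epsilon\}$ is convex, hence connected, and $f_\theta$ is continuous, so a continuous nowhere-zero function on a connected set has constant sign; thus $\mathrm{sgn}\,f_\theta$ is constant on $\mathcal{B}(X,\epsilon)$. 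Evaluating at the center $X$, where $f_{oracle}(X)=Y$, the assumption gives $\mathrm{sgn}\,f_\theta(X)=f_{oracle}(X)=Y$, so $\mathrm{sgn}\,f_\theta(X')=Y$ for all $X'$ in the ball, i.e.\ $f_\theta(X')Y>0$ everywhere. Hence there is no robust error at $(X,Y)$, which is exactly the contrapositive of the desired implication. Taking expectations of the resulting pointwise bound $\mathbbm{1}\{\text{robust error}\}\le\mathbbm{1}\{\text{calibrated error}\}$ then yields $\mathcal{R}_{rob}(f)\le\mathcal{R}_{cali}(f)$.

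The role of the ``not overlapped'' hypothesis is to exclude the degenerate configuration in which the two decision boundaries coincide inside the ball: there the oracle boundary could cross $\mathcal{B}(X,\epsilon)$ while $f_\theta$ vanishes exactly on it, and one must check that this does not secretly cause the calibrated error to vanish. In the contrapositive route this is handled automatically, since strict positivity of the product $f_\theta f_{oracle}$ already rules out $f_\theta=0$ anywhere in the ball.

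I expect the main obstacle to be the case where the maximizing adversarial perturbation $X'$ crosses the \emph{oracle} decision boundary, so that $f_{oracle}(X')=-Y$ and the very same $X'$ witnesses robust error but \emph{not} calibrated error. A direct (forward) argument would then have to manufacture a different witness by walking along the segment from $X$ to $X'$ and invoking the intermediate value theorem at the first point where $\mathrm{sgn}\,f_\theta$ and $f_{oracle}$ disagree. The contrapositive formulation is attractive precisely because it sidesteps this witness construction and reduces everything to the constancy of $\mathrm{sgn}\,f_\theta$ on a connected set.
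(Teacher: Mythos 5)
Your proof is correct, but it takes a genuinely different route from the paper's. The paper argues forward: it splits into the cases $f_{\theta}(X)Y\le 0$ and $f_{\theta}(X)Y>0$, and in the troublesome sub-case where the robust witness $X'$ satisfies $f_{oracle}(X')f_{\theta}(X')>0$ it produces a calibrated witness $X''$ by contradiction, asserting that sign agreement of $f_{\theta}$ and $f_{oracle}$ throughout $\mathcal{B}(X,\epsilon)$ would make their decision boundaries completely overlap inside the ball, contradicting the ``not overlapped'' hypothesis. You instead contrapose and use topology: strict positivity of $f_{\theta}f_{oracle}$ on the ball forces $f_{\theta}\ne 0$ there, continuity of $f_{\theta}$ on the connected (convex) ball then forces $\mathrm{sgn}\,f_{\theta}$ to be constant, and evaluating at the center (where $f_{oracle}(X)=Y$) gives $f_{\theta}(X')Y>0$ on the whole ball, i.e.\ no robust error. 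Two observations on the comparison. First, your argument nowhere uses the non-overlap hypothesis; what it uses instead is continuity of $f_{\theta}$, which the theorem does not state (Theorem 2 assumes only measurability) but which holds for the neural networks the paper has in mind. Second, your route is tighter than the paper's at exactly the step you flag: the footnote defines ``not overlapped'' as $f_{\theta}$ and $f_{oracle}$ not being exactly the same \emph{globally}, and global non-identity does not by itself forbid sign agreement within one particular $\epsilon$-ball, so the paper's contradiction step is informal as written; your zero-crossing observation --- a continuous function that changes sign must vanish, and a zero of $f_{\theta}$ already gives $f_{\theta}(X'')f_{oracle}(X'')\le 0$, triggering the non-strict calibrated-error event --- is precisely what is needed to make that step rigorous. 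What each approach buys: the paper's case analysis tracks the geometry of the two boundaries (matching the visualization in Appendix A) and keeps its stated hypothesis in play, while yours is shorter, self-contained, and shows that under continuity the non-overlap assumption is dispensable. Both proofs share the implicit assumption $f_{oracle}(X)=Y$ for $D$-almost every $(X,Y)$, which you, unlike the paper, state explicitly.
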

The proof can be found in Appendix~\ref{proofR}. From Theorem~\ref{theo_calire}, it can be observed that minimizing robust error can be obtained by minimizing calibrated robust error.

\subsection{Upper Bound on Calibrated Robust Error}
 In this section, we derive an upper bound on calibrated robust error.
\begin{restatable}[Upper Bound]{thm}{upperbound}
Let $\psi$ be a nondecreasing, continuous and convex function:$[0,1] \xrightarrow{} [0,\infty]$. Let $\mathcal{R}_{\phi}(f):=\mathbf{E}{\phi}(f_{\theta}(X)Y)$ and $\mathcal{R}_{\phi}^{*}:=\min_{f}\mathcal{R}_{\phi}(f)$, $\mathcal{R}(f):=\mathbf{E}(f_{\theta}(X)Y)$ and $\mathcal{R}^{*}=\min_{f}\mathcal{R}(f)$. For any non-negative loss function $\phi$ such that $\phi(0)\geq 1$, any measurable $f_{\theta}: \mathcal{X} \xrightarrow{}\mathbf{R}$ and any probability distribution on $\mathcal{X} \times \{+1,-1\}$, we have:
\begin{align}
   & \mathcal{R}_{cali}(f) -\mathcal{R}^{*} \leq   \psi^{-1} (\mathcal{R}_{\phi}(f)-\mathcal{R}_{\phi}^{*})+\mathbf{E} \biggl[\max_{\substack{X'\in \mathbf{B}(X,\epsilon)\\ f_{oracle}(X')=Y}} \phi(f_{\theta}(X')Y)\biggr].
\end{align}
\end{restatable}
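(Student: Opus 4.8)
The plan is to mimic the boundary/natural decomposition behind the TRADES bound \cite{pmlr-v97-zhang19p} and then control each piece with a different tool. Since $f_{oracle}$ returns the true label we have $f_{oracle}(X)=Y$ almost surely, so choosing the perturbation $X'=X$ shows that \emph{every} naturally misclassified point already satisfies the defining event of $\mathcal{R}_{cali}$. Partitioning that event on the sign of $f_\theta(X)Y$ therefore gives an exact decomposition into the natural $0$--$1$ risk and a boundary term:
\begin{align*}
\mathcal{R}_{cali}(f) &= \underbrace{\mathbf{E}\,\bm{1}\{f_\theta(X)Y\leq 0\}}_{=\,\mathcal{R}(f)}\\
&\quad+\underbrace{\mathbf{E}\,\bm{1}\{f_\theta(X)Y>0,\ \exists\,X'\in\mathcal{B}(X,\epsilon):\,f_\theta(X')f_{oracle}(X')\leq 0\}}_{=:\,\mathcal{R}_{bdy}(f)}.
\end{align*}
The strategy is then to bound $\mathcal{R}(f)-\mathcal{R}^{*}$ by the $\psi^{-1}$ term and $\mathcal{R}_{bdy}(f)$ by the expected maximal surrogate loss, and finally add the two estimates and subtract $\mathcal{R}^{*}$.

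For the first term I would invoke the classification-calibration result used in \cite{pmlr-v97-zhang19p}, which relates excess $0$--$1$ risk to excess surrogate risk through the convex transform $\psi$: since the first summand is exactly $\mathcal{R}(f)$, this gives $\mathcal{R}(f)-\mathcal{R}^{*}\leq\psi^{-1}(\mathcal{R}_\phi(f)-\mathcal{R}_\phi^{*})$ with no extra work. This step is essentially a black box and I expect no difficulty there.

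For the boundary term I would argue pointwise and show the surrogate objective in the bound dominates the indicator. Fix an $X$ with $f_\theta(X)Y>0$ that admits a witness $X'$ with $f_\theta(X')f_{oracle}(X')\leq 0$. If that witness is \emph{semantically unchanged}, i.e.\ $f_{oracle}(X')=Y$, then necessarily $f_\theta(X')Y\leq 0$, so using that $\phi$ is non-increasing (as is standard for margin-based surrogate losses) together with $\phi(0)\geq 1$ yields $\max_{X'\in\mathcal{B}(X,\epsilon),\,f_{oracle}(X')=Y}\phi(f_\theta(X')Y)\geq\phi(0)\geq 1$, which upper-bounds the indicator on this event. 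Taking expectations then gives $\mathcal{R}_{bdy}(f)\leq\mathbf{E}\big[\max_{X'\in\mathcal{B}(X,\epsilon),\,f_{oracle}(X')=Y}\phi(f_\theta(X')Y)\big]$ for the contribution coming from same-label witnesses.

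The hard part will be the complementary case of the boundary term: a witness $X'$ that is \emph{semantically changed}, i.e.\ $f_{oracle}(X')=-Y$ with $f_\theta(X')Y\geq 0$ — an ``invariance adversarial example''. Such an $X'$ triggers the event of $\mathcal{R}_{cali}$ yet lies \emph{outside} the constraint set $\{f_{oracle}(X')=Y\}$ over which the stated maximum is taken, so the elementary estimate above does not reach it. To close this gap I would use the continuity of $f_\theta$, the convexity of $\mathcal{B}(X,\epsilon)$, and the hypothesis that the decision boundaries of $f_\theta$ and $f_{oracle}$ do not coincide: moving along the segment from $X$ (where $f_{oracle}=Y$) to such an $X'$ must cross the oracle boundary, and one has to argue that this crossing forces a nearby same-label point at which $f_\theta(\cdot)Y$ is driven small enough that the restricted maximum of $\phi$ still exceeds the indicator. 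Making this reduction rigorous — converting an oracle-boundary crossing into a controllable value of $\phi$ on the semantically-unchanged region — is the step I expect to be genuinely delicate, and it is precisely where the non-overlapping-boundary assumption must earn its keep.
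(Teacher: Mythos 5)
Your decomposition and the first two estimates are exactly the paper's: it splits the event defining $\mathcal{R}_{cali}$ on the sign of $f_\theta(X)Y$, uses $X'=X$ as a witness to identify the first piece with $\mathcal{R}(f)$, bounds $\mathcal{R}(f)-\mathcal{R}^{*}\leq\psi^{-1}(\mathcal{R}_{\phi}(f)-\mathcal{R}_{\phi}^{*})$ by the same classification-calibration black box you invoke, and controls same-label witnesses by the pointwise domination $\bm{1}\{\cdot\leq 0\}\leq\phi(\cdot)$ via $\phi(0)\geq 1$. Up to there you and the paper coincide step for step.

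The case you flagged as hard --- a witness $X'$ with $f_{oracle}(X')=-Y$ and $f_\theta(X')Y\geq 0$, an ``invariance adversarial example'' --- is a genuine gap, and you should know the paper does not actually close it. After bounding the boundary term by $\mathbf{E}\max_{X'\in\mathcal{B}(X,\epsilon)}\phi(f_\theta(X')f_{oracle}(X'))$, the paper's final line simply says ``Let $f_{oracle}(X')=Y$'' and replaces that maximum by the maximum restricted to $\{f_{oracle}(X')=Y\}$. Restricting the feasible set can only decrease a maximum, so as an inequality this step runs in the wrong direction: it silently discards exactly the region of the ball where the product loss can be largest. Moreover, the topological repair you sketch is unlikely to succeed, because along the segment from $X$ to such an $X'$ the product $f_\theta f_{oracle}$ changes sign due to the \emph{oracle} flipping, not $f_\theta$. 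Continuity of $f_\theta$ hands you same-label points arbitrarily close to the oracle boundary, but $f_\theta(\cdot)Y$ can remain bounded below by some $c>0$ on the entire ball; then the restricted maximum is at most $\phi(c)$, which may be strictly less than $1$, while the indicator equals $1$. So no argument from continuity and convexity alone can convert the oracle-boundary crossing into a controllable value of $\phi$ on the semantically-unchanged region --- one would need an extra hypothesis tying the behavior of $f_\theta$ to $f_{oracle}$ near the oracle boundary. Your instinct about where the difficulty lives was exactly right; the honest conclusion is that the difficulty sits in the stated bound itself, and the paper's proof steps over it rather than resolving it.
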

The proof can be found in Appendix~\ref{proofTheo}. From the upper bound, it can be observed:
\begin{itemize}
    \item If the oracle classifier's decision boundary crosses $\epsilon$-ball, the upper bound is decided by the adversarial examples that close to the oracle classifier's decision boundary. If the oracle classifier's decision boundary does not cross $\epsilon$-ball, the upper bound is decided by the adversarial examples that close to the boundary of $\epsilon$-ball. 
    \item Minimizing $ \mathcal{R}_{\phi}(f) + \mathbf{E} \biggl[\max_{\substack{X'\in \mathbf{B}(X,\epsilon)\\ f_{oracle}(X')=Y}} \phi(f(X')Y)\biggr]$ can reduce the calibrated robust error. From Theorem~\ref{theo_calire}, we can know that calibrated robust error is the upper bound of robust error. Therefore, it also reduces the robust error of the model.   
\end{itemize}
\subsection{Method for Defense} \label{method}
From the upper bound, we define the general objective function as follows:
\begin{align}
    \min_{\theta} \mathbf{E} \biggl[\phi(f_{\theta}(X)Y)+\max_{\substack{X'\in \mathbf{B}(X,\epsilon)\\ f_{oracle}(X')=Y}} \phi(f_{\theta}(X')Y)\biggr]. \label{primalobj}
\end{align}
The analysis for the difference between Eq.~\ref{primalobj} and the general objective function~\cite{pmlr-v97-zhang19p} derived on the upper bound of robust error can be found in Appendix~\ref{anlgeneralobj}.

The first term in Eq.~\ref{primalobj} is the surrogate loss of misclassification on natural data, and we design it as cross-entropy weighted by $(1-predicted\; probability)$. Formally, it is expressed as:
\begin{align}
    \phi(f_{\theta}(X)Y)= L(f_{\theta}(X),Y)\cdot(1-P(Y=y|X)).   \label{natual loss}
\end{align}

The second term in Eq.~\ref{primalobj} is the surrogate loss on adversarial examples. However, it can not be solved directly since $f_{oracle}$ is unknown. Therefore, we propose a approximate solution with two steps. Firstly, we generate adversarial examples based on $\max_{X'\in \mathbf{B}(X,\epsilon)} \phi(f_{\theta}(X')Y)$. Secondly, we adapt the adversarial examples in pixel-level such that it approximately satisfies the constraint $f_{oracle}(X')=Y$ and we name the pixel-level adapted adversarial examples as \textbf{calibrated adversarial examples}. We rewrite $\max_{\substack{X'\in \mathbf{B}(X,\epsilon)\\ f_{oracle}(X')=Y}} \phi(f_{\theta}(X'),Y)$ as follows:
\begin{align}
&    X'=X+\delta=argmax_{X'\in \mathbf{B}(X,\epsilon)} \phi(f_{\theta}(X')Y) \label{eq.6} \\
&    X_{cali}'=X+M\odot \delta,\; M \in  \mathbb{R}^{m \times n},   M[i,j] \in (0,1),  \label{eq.7}
\end{align}
where the $\odot$ denotes Hadamard product. From Eq.~\ref{eq.6} and Eq.~\ref{eq.7}, we can see that calibrated adversarial examples $X_{cali}'$ are obtained by adapting adversarial perturbations with soft mask $M$. Please refer to Section~\ref{visualizationM} and Appendix~\ref{difference} for better understanding how does the mask $M$ adapt the adversarial perturbations. $\delta$ can be solved by various adversarial attacks, e.g., PGD attack. Therefore, the problem of the inner maximization in Eq.~\ref{primalobj} is transformed to find a proper soft mask $M$. Considering that soft mask $M$ relies on input $X$ and perturbation $\delta$, we propose to learn it by a neural network $g_{\varphi}$, which is defined as follows:
\begin{align}
    M=g_{\varphi}(X,\delta).
\end{align}
Therefore, by replacing $\phi(f_{\theta}(X)Y)$ with Eq.~\ref{natual loss} and $X'$ with $X_{cali}'$, the objective function (Eq.~\ref{primalobj}) is transformed to follows:

\begin{align}
\min_{\theta} \mathbf{E}_{(X,Y) \sim D} [L(f_{\theta}(X),Y)\cdot(1-P(Y=y|X)) + \beta \cdot \phi(f_{\theta}(X_{cali}')Y)] \label{f_obj0},
\end{align}
where $X_{cali}'$ is solved by Eq.~\ref{eq.7}, and $\beta$ is a hyper-parameter for balancing two terms. In practice, we follow~\cite{pmlr-v97-zhang19p,wang2019improving} to use $\mathbf{KL}$ divergence for the surrogate loss $\phi(\cdot)$ in the outer minimization step. Thus, Eq.~\ref{f_obj0} can be reformulated as follows: 
\begin{align}
    \min_{\theta} \mathbf{E}_{(X,Y) \sim D} [L(f_{\theta}(X),Y)\cdot(1-P(Y=y|X))  + \beta \cdot \mathbf{KL}(P(Y|X_{cali}')||P(Y|X))].  \label{f_objective}
\end{align}
From Eq.~\ref{f_objective}, it can be observed that there are two main differences with other variants of adversarial training, e.g., AT, Trades, MART, etc. (See Appendix~\ref{loss_variants} for the detailed descriptions of their loss functions.):
\begin{itemize}
    \item We use weighted cross-entropy loss instead of cross-entropy loss in order to make the loss function pay more attention to misclassified samples.
    \item The $\mathbf{KL}$ divergence is based on calibrated adversarial examples that reduce the adverse of some adversarial examples because calibrated adversarial examples are expected to be satisfied with $f_{oracle}(X_{cali}')=Y$.
\end{itemize}

Finally we design the objective function for $g_{\varphi}(X,\delta)$ based on the two constraints: (1) $X_{cali}'$ should be close to $X'$ as far as possible in order to keep the inner maximization constraint in Eq.~\ref{primalobj}. (2) $X_{cali}'$ is expected to be satisfied with $f_{oracle}(X_{cali}')=Y$.
Therefore, the objective function for $g_{\varphi}(X,\delta) $ is designed as follows:
\begin{align}
\small
 \min_{\varphi} \mathbf{E}_{(X,Y) \sim D}[  \mathbf{KL}(P(Y|X_{cali}') || P(Y|X'))  +\beta_{1} \cdot L(f_{\theta}(X_{cali}'),Y)],     \label{g_objective}
\end{align}
 where $\mathbf{KL}$ divergence term corresponds to the constraint (1) and cross-entropy loss $L(\cdot)$ corresponds to the constrain (2). $\beta_{1}$ is the hyper-parameter that controls the strength of the constraint (2).
 
 We denote our method as calibrated adversarial training with PGD attack (CAT$_{cent}$) if $X'$ is solved by PGD attack, calibrated adversarial training with C\&W$_{\infty}$ attack (CAT$_{cw}$) if $X'$ is solved by  C\&W$_{\infty}$ attack.

\section{Experiments}
In this section, we first conduct extensive experiments to assess the effectiveness of our approach in achieving natural accuracy and adversarial robustness, then we conduct experiments for understanding the proposed method.
\subsection{Evaluation on Robustness and Natural Accuracy}
\subsubsection{Experimental settings}
Two datasets are used in our experiments: MNIST~\cite{lecun1998mnist}, and CIFAR-10~\cite{krizhevsky2010cifar}. For MNIST, all defense models are built on four convolution layers and two linear layers. For CIFAR-10, we use PreAct ResNet-18~\cite{He2016} and WideResNet-34-10~\cite{zagoruyko2016wide} models. The architectures of auxiliary neural network $g_{\varphi}$ for MNIST and CIFAR-10 can be found in Appendix~\ref{imple}.  Following previous researches~\cite{pmlr-v97-zhang19p,wu2020adversarial}, Robustness is measured by robust accuracy against white-box and black-box attacks. For white-box attack, we adopt PGD-20/100 attack~\cite{Madry2017}, FGSM attack~\cite{Goodfellow2014} and C\&W$_{\infty}$~\cite{carlini2017towards}. For black-box attack, we adopt a query-based attack: Square attack~\cite{andriushchenko2020square}. 

\textbf{Baselines}
Standard adversarial training and the three latest defense methods are considered: 1)TRADES~\cite{pmlr-v97-zhang19p}, 2)MART~\cite{wang2019improving}, 3)FAT~\cite{pmlr-v119-zhang20z}. The detailed descriptions of baseline methods can be found in Appendix~\ref{imple}.

\textbf{Hyper-parameter settings}\label{Hyperps}
During training phase, for MNIST, we set $T =20$, $\epsilon=0.3$, $\alpha = \epsilon/T$ for the training attack, and set $\beta = 1$, $ \beta_{1}=0.3$ by default. For CIFAR-10, we set $T=10$, $\alpha=2/255$, $\epsilon=8/255$ for the training attack and set $\beta = 5$ by default. We train models with $\beta_{1}=0.05,0.1,0.3$ respectively. For all baselines, they are trained using the official code that their authors provided and the hyper-parameters for them are set as per their original papers. More training details are introduced in Appendix~\ref{imple}.  

During test phase, for MNIST, we set $\epsilon = 0.3$ and $\alpha = 0.015$ for PGD attack. For CIFAR-10, we set $\epsilon = 8/255$ and $\alpha = 0.003$ for PGD attack. And we follow the implementation in \cite{pmlr-v119-zhang20z} for C\&W$_{\infty}$ attack where $\epsilon=0.031$, $\alpha=0.003$, $T=30$ and $k=50$.

Note that during the training process, we use the PGD attack with random start, i.e.\ adding random perturbation of $[-\epsilon,\epsilon]$ to the input before PGD perturbation. But for the test in our experiments, we use PGD attack without random start by default~\footnote{We find that PGD attack (restart=1) without random start is stronger than that with random start.}.

\subsubsection{Evaluation on White-box Robustness}
This section shows the evaluation on white-box attacks. All attacks have full access to model parameters. We first conduct an evaluation on a simple benchmark dataset: MNIST and then conduct an evaluation on a complex dataset: CIFAR-10.

\textbf{MNIST} 
Table~\ref{tab:mnist} reports natural accuracy and robust accuracy under PGD-20 and PGD-100 respectively. For baselines, we do not include results from FAT and MART since they do not provide training code for MNIST. From Table~\ref{tab:mnist}, we can see that the proposed method can achieve higher natural accuracy and robust accuracy compared with standard adversarial training. 
Besides, we notice that with larger $\epsilon=0.4$, adversarial robustness can be boosted further by our defense method.

\textbf{CIFAR-10}
We evaluate the performance based on two benchmark architectures, i.e., PreAct ResNet-18 and WideResNet-34-10. All defense models are tested under the same attack settings as described in Section~\ref{Hyperps} except for \emph{FAT} on WideResNet-34-10 since this evaluation is copied from their paper directly where it is evaluated with $\epsilon=0.031$ for PGD attack. Table~\ref{tab:RN_CIFAR} and Table~\ref{tab:WRN_CIFAR} report natural accuracy and robust accuracy on the test set. ``Avg'' denotes the average of natural accuracy and all robust accuracy, and it indicates the overall performance on both natural accuracy and robust accuracy. For our method, we report mean + standard deviation with 5 repeated runs. 

From Table~\ref{tab:RN_CIFAR} and Table~\ref{tab:WRN_CIFAR}, it can be seen that our method achieves the best performance on both natural accuracy and robust accuracy under all attacks except for FGSM among baselines. Moreover, with $\beta_{1}=0.3$, our method improves natural accuracy with a large margin while keeps comparable performance with baselines on robust accuracy. Besides, our method achieves high ``Avg'' value, which indicates our method has a good trade-off between natural accuracy and robust accuracy. Finally, we observe that the robustness achieved by our method has smaller accuracy under stronger attacks, i.e.\ PGD-100 and CW$_{\infty}$, than weaker attacks, i.e.\ FGSM and PGD-20. It indicates that the robustness achieved by our method is not caused by ``gradient masking''~\cite{athalye2018obfuscated}. 

Experiments on CIFAR-100 can be found in Appendix~\ref{exp100}.

\begin{table}[t]
\caption{Evaluation on MNIST. The value besides model name denotes the max perturbation magnitude used in the training phase. -: denotes the training loss fails in decrease. We report mean with 5 repeated runs and skip the standard deviations since they are small ($<0.4$\%), which hardly affects the results.}
\vspace{-0.2 in}
\begin{center}
\begin{small}
\begin{sc}
\begin{threeparttable}
\begin{tabular}{l|ccc}
\toprule
Models & Natural & PGD-20 & PGD-100 \\
\midrule
AT($0.3$) & 99.2& 93.4&92.3  \\
AT($0.4$) & -& -& -\\
TRADES($0.3$)\tnote{*}&99.3&94.9&92.9\\
TRADES($0.4$)\tnote{*}&99.1&95.3&91.6\\
CAT$_{cent}$($0.3$)&\textbf{99.3}&95.4 &93.2 \\
CAT$_{cent}$($0.4$) & 99.2& 96.8& 95.8 \\
CAT$_{cw}$($0.3$)&99.1 &96.2 &95.0  \\
CAT$_{cw}$($0.4$) & 99.1& \textbf{97.1}& \textbf{96.2}\\
\bottomrule
\end{tabular}
\begin{tablenotes}\footnotesize
\small
\item[*] Model is trained with $\beta=1.0$.
\end{tablenotes}
\end{threeparttable}
\end{sc}
\end{small}
\end{center}
\label{tab:mnist}
\vskip -0.1in
\end{table}

\begin{table*}[t]
\caption{Evaluation on CIFAR-10 for PreAct ResNet-18 under white-box setting.}
\vspace{-0.2 in}
\begin{center}
\begin{small}
\begin{sc}
\begin{tabular}{l|cccccc}
\toprule
Models & Natural &FGSM  & PGD-20 & PGD-100&CW$_{\infty}$&Avg \\
\midrule
AT & 83.0& 57.3& 52.9&51.9 &50.9 &59.2\\
TRADES($\beta:6$) &82.8 & 57.6& 52.8& 51.7&50.9&59.2\\
MART ($\lambda:5$) & 83.0& \textbf{60.2}&53.9 &52.3&49.9 &59.9\\
FAT($\beta$:6) & 85.1& 58.3& 52.1&50.5 &50.4 &59.3\\
CAT$_{cent}$($\beta_{1}:0.05$)& 84.1 $\pm$ 0.3&59.5 $\pm$ 0.2 & \textbf{55.6 $\pm$ 0.3}&\textbf{54.9$\pm$0.3}&50.8$\pm$0.2&\textbf{61.0} \\
CAT$_{cent}$($\beta_{1}:0.1$) & 85.9 $\pm$0.2&58.5$\pm$0.3 &54.1 $\pm$0.1& 53.4 $\pm$0.06&50.44$\pm$0.3 &60.4\\
CAT$_{cw}$($\beta_{1}:0.05$) & 84.2 $\pm$0.3&58.9$\pm$0.2 &55.3 $\pm$0.4& 54.5 $\pm$0.5&\textbf{51.3$\pm$0.3} &60.9\\
CAT$_{cw}$($\beta_{1}:0.1$) & 85.1 $\pm$0.5&58.9$\pm$0.3 &54.9 $\pm$0.5& 54.1 $\pm$0.4&51.2$\pm$0.1 &60.8\\
\toprule
CAT$_{cent}$($\beta_{1}:0.3$) & 88.0 $\pm0.2$&57.0$\pm$0.4 &51.1 $\pm0.5$& 49.9 $\pm0.4$&47.8$\pm$0.2 &58.8\\
CAT$_{cw}$($\beta_{1}:0.3$) & \textbf{88.1 $\pm0.1$}&57.4$\pm0.5$ &51.5 $\pm0.1$& 50.1 $\pm0.2$&48.8$\pm$0.2 &59.2\\
\bottomrule
\end{tabular}
\end{sc}
\end{small}
\end{center}
\label{tab:RN_CIFAR}
\vskip -0.1in
\end{table*}

\begin{table*}[t]
\caption{Evaluation on CIFAR-10 for WideResNet-34-10 under white-box setting.}
\vspace{-0.2 in}
\begin{center}
\begin{small}
\begin{sc}
\begin{tabular}{l|cccccc}
\toprule
Models & Natural&FGSM  & PGD-20 & PGD-100&CW$_{\infty}$&Avg \\
\midrule
AT & 86.1& 61.8& 56.1&55.8 &54.2 &62.8\\
TRADES($\beta:6$)  & 84.9& 60.9& 56.2& 55.1&54.5& 62.3\\
MART ($\lambda:5$) & 83.6& 61.6&57.2&56.1&53.7&62.5\\
FAT($\beta$:6) & 86.6$\pm$0.6& 61.9$\pm$0.6& 55.9$\pm$0.2&55.4$\pm$0.3 &54.3$\pm$0.2&62.8\\
CAT$_{cent}$($\beta_{1}:0.05$)& 86.6$\pm$0.1 & 60.9 $\pm$ 0.1& 57.7 $\pm$ 0.1& 57.2 $\pm$0.2&53.9 $\pm$0.6& 63.3\\
CAT$_{cent}$($\beta_{1}:0.1$)&87.5$\pm$0.51 & 61.5 $\pm$0.5&57.2 $\pm$0.3& 56.6 $\pm$0.4&54.0$\pm$0.4 &63.4\\
CAT$_{cw}$($\beta_{1}:0.05$)&86.4$\pm$0.1 & \textbf{62.7 $\pm$0.2}&\textbf{59.7 $\pm$0.1}& \textbf{58.7 $\pm$0.3}&\textbf{56.0$\pm$0.1} &\textbf{64.7}\\
CAT$_{cw}$($\beta_{1}:0.1$)&87.4$\pm$0.1 & 62.3 $\pm$0.1&58.6 $\pm$0.2& 57.3 $\pm$0.19&55.6$\pm$0.07 &64.2\\
\toprule
CAT$_{cent}$($\beta_{1}:0.3$)&$88.9\pm0.4$ &59.8 $\pm$0.6&54.8 $\pm$0.7&53.9 $\pm$0.6&51.6$\pm0.2$ &61.8\\
CAT$_{cw}$($\beta_{1}:0.3$)&\textbf{89.3$\pm$0.1} & 60.8$\pm$0.27&55.1$\pm$0.3&53.2$\pm$0.5&52.6$\pm$0.4 &62.2\\
\bottomrule
\end{tabular}
\end{sc}
\end{small}
\end{center}
\label{tab:WRN_CIFAR}
\vskip -0.1in
\end{table*}
\subsubsection{Evaluation on Black-box Robustness}
We conduct evaluation on black-box settings. We choose to use Square attack~\cite{andriushchenko2020square} in our experiments. Square attack is a query-efficient black-box attack, which has been shown that it achieves white-box comparable performance and resists ``gradient masking''~\cite{andriushchenko2020square}. In our experiments, we set hyper-parameters $n_{queries}=5000$ and $eps=8/255$ for Square attack. The experiments are carried out on CIFAR-10 test set based on PreAct ResNet-18 and WideResNet-34-10 architectures. Results are showed in Table~\ref{tab:black}. It can be seen that our method achieves the best accuracy among all baselines under square attack. Besides, by comparing Table~\ref{tab:black} with Table~\ref{tab:RN_CIFAR} and Table~\ref{tab:WRN_CIFAR}, we can find that accuracy under black-box attack is lower than under white-box attack like PGD and CW$_{\infty}$ attacks. It demonstrates that adversarial robustness achieved by our method is not due to ``gradient masking ''~\cite{athalye2018obfuscated}. 

\subsection{Understanding the Proposed Defense Method}
\subsubsection{Visualization of Soft Mask $M$}\label{visualizationM}
We visualize the learned soft mask $M$ for further understanding calibrated adversarial examples. As showed in Figure~\ref{Fig:visMask}, natural images are randomly selected from MNIST, and adversarial examples are generated by PGD-20 attack with $\epsilon=0.4$. Soft masks and calibrated adversarial examples are generated accordingly. It can be observed that soft masks have high values on the background but have low values on the digit, which indicates that they try to reduce perturbations on the digit. Furthermore, by comparing calibrated adversarial examples with adversarial examples, we find that pixel values on digits for calibrated adversarial examples tend to be homogeneous, which is more consist with them on natural images. In other words, soft masks try to prevent adversarial examples from breaking semantic information that could impact the performance of the model.

\begin{figure}[htb]
\begin{minipage}[b]{0.49\textwidth}
\captionof{table}{Evaluation on CIFAR-10 for PreAct ResNet-18  and WideResNet-34-10 under black-box setting. -: Not Available.}
\label{tab:black}
\begin{small}
\begin{sc}
\begin{tabular}{l|cc}
\toprule
Models &ResNet &WRN \\
\midrule
AT & 55.12& 59.19\\
TRADES& 54.85&59.0   \\
MART & 54.98&  57.7 \\
FAT & 55.35& - \\
CAT$_{cent}$($\beta_{1}:0.05$) &56.4$\pm$0.1&59.1$\pm$0.5\\
CAT$_{cent}$($\beta_{1}:0.1$)   &56.4$\pm$0.1 &59.6$\pm$0.8\\
CAT$_{cw}$($\beta_{1}:0.05$)  &56.3 $\pm$0.2&60.9$\pm$0.1 \\
CAT$_{cw}$($\beta_{1}:0.1$)  &\textbf{56.5 $\pm$0.1}& \textbf{60.9$\pm$0.2}\\
\bottomrule
\end{tabular}
\end{sc}
\end{small}
\end{minipage}
\begin{minipage}[b]{0.49\textwidth}
\begin{flushright}
\setlength\tabcolsep{1pt}
\settowidth\rotheadsize{Radcliffe Cam}
\begin{tabular}{l ccccc}
\rothead{\centering{Natural}} 
                        &   \includegraphics[width=0.17\textwidth,valign=m]{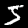}    
                        &   \includegraphics[width=0.17\textwidth,valign=m]{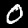}
                        &   \includegraphics[width=0.17\textwidth,valign=m]{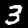}
                        &   \includegraphics[width=0.17\textwidth,valign=m]{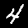}
                        &   \includegraphics[width=0.17\textwidth,valign=m]{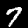}\\[-3.2em]
\rothead{\centering{Adv}} 
                        &   \includegraphics[width=0.17\textwidth,valign=m]{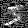}
                        &   \includegraphics[width=0.17\textwidth,valign=m]{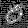}
                        &   \includegraphics[width=0.17\textwidth,valign=m]{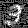}
                        &   \includegraphics[width=0.17\textwidth,valign=m]{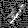}
                        &   \includegraphics[width=0.17\textwidth,valign=m]{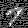}\\[-3.2em]
\rothead{\centering{Mask}} 
                        &   \includegraphics[width=0.17\textwidth,valign=m]{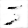}
                        &   \includegraphics[width=0.17\textwidth,valign=m]{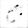}
                        &   \includegraphics[width=0.17\textwidth,valign=m]{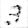}
                        &   \includegraphics[width=0.17\textwidth,valign=m]{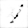}
                        &   \includegraphics[width=0.17\textwidth,valign=m]{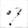}\\[-3.2em]
\rothead{\centering{Cali Adv}} 
                        &   \includegraphics[width=0.17\textwidth,valign=m]{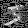}
                        &   \includegraphics[width=0.17\textwidth,valign=m]{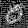}
                        &   \includegraphics[width=0.17\textwidth,valign=m]{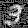}
                        &   \includegraphics[width=0.17\textwidth,valign=m]{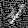}
                        &   \includegraphics[width=0.17\textwidth,valign=m]{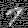}\\[-2.5em]
\end{tabular}
\captionof{figure}{Visualization of soft mask $M$. }
\label{Fig:visMask}
\end{flushright}
\end{minipage}
\end{figure}

\subsubsection{Training with Larger Perturbation Bound}\label{large perturbations}
Our method adapts adversarial examples for mitigating the adverse effect, which enables a model trained with larger perturbations. To verify the performance, we conduct experiments on PreAct ResNet-18 models trained with $\epsilon=8,9,10,11,12$ respectively and test them on CIFAR-10 test set. Baselines are trained with their official codes. Results are showed in Figure~\ref{fig:epsilons}. From Figure~\ref{fig:eps:a}, it can be observed that our method has a clearly increasing trend on robust accuracy with the increase of $\epsilon$. From Figure~\ref{fig:eps:b}, we can see that the sum of robust accuracy and natural accuracy has a slightly decreasing trend for our method, indicating a trade-off between robust accuracy and natural accuracy. However, our method's descending grade is lower than Trades and AT, which also verifies that our method has a good trade-off between robust accuracy and natural accuracy.
\begin{figure}[htb]
\centering
\vspace*{-0.1in}
\subfigure[Robust]{\includegraphics[width=0.4\textwidth]{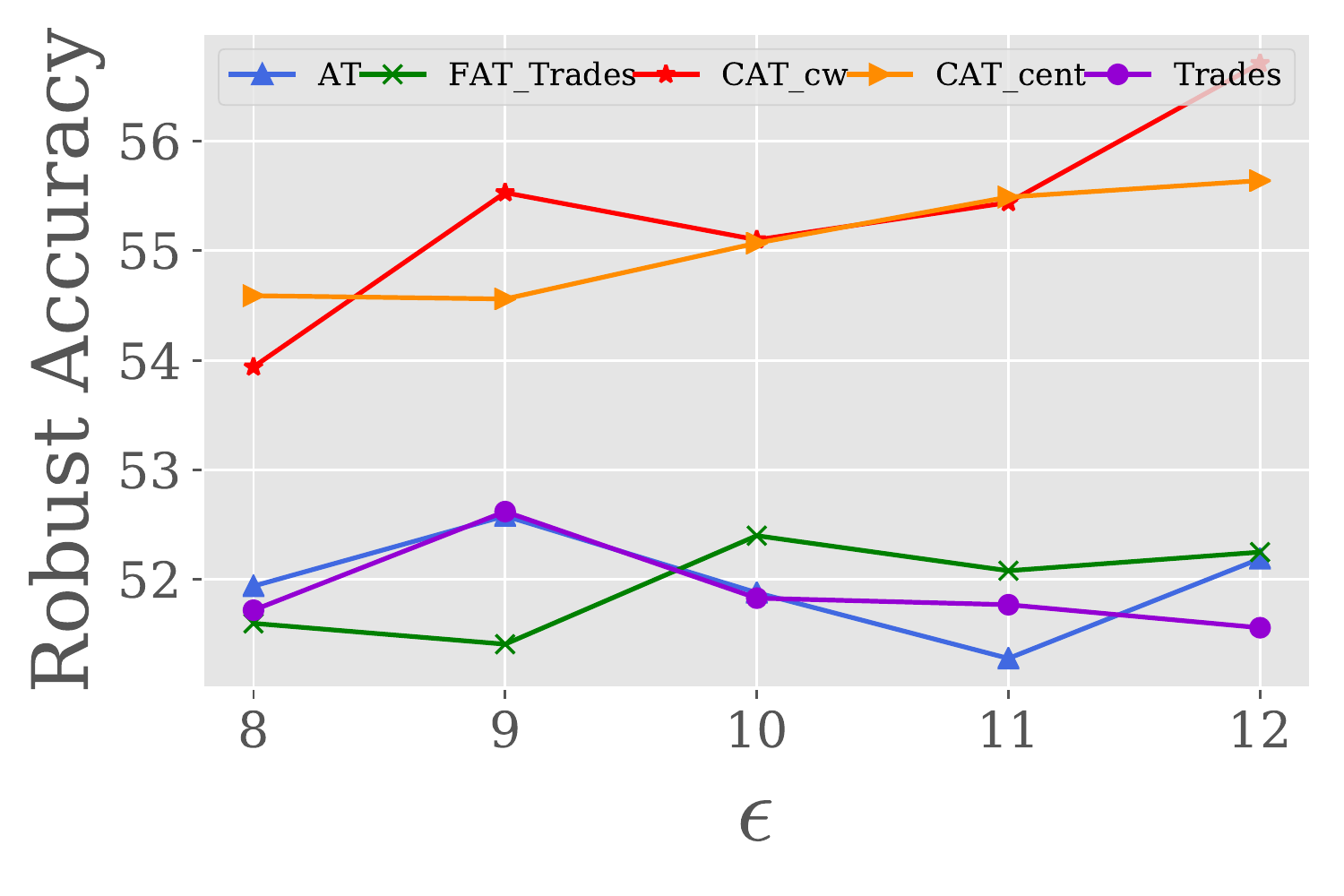} \label{fig:eps:a}
}
\subfigure[Robust+Natural]{\includegraphics[width=0.4\textwidth]{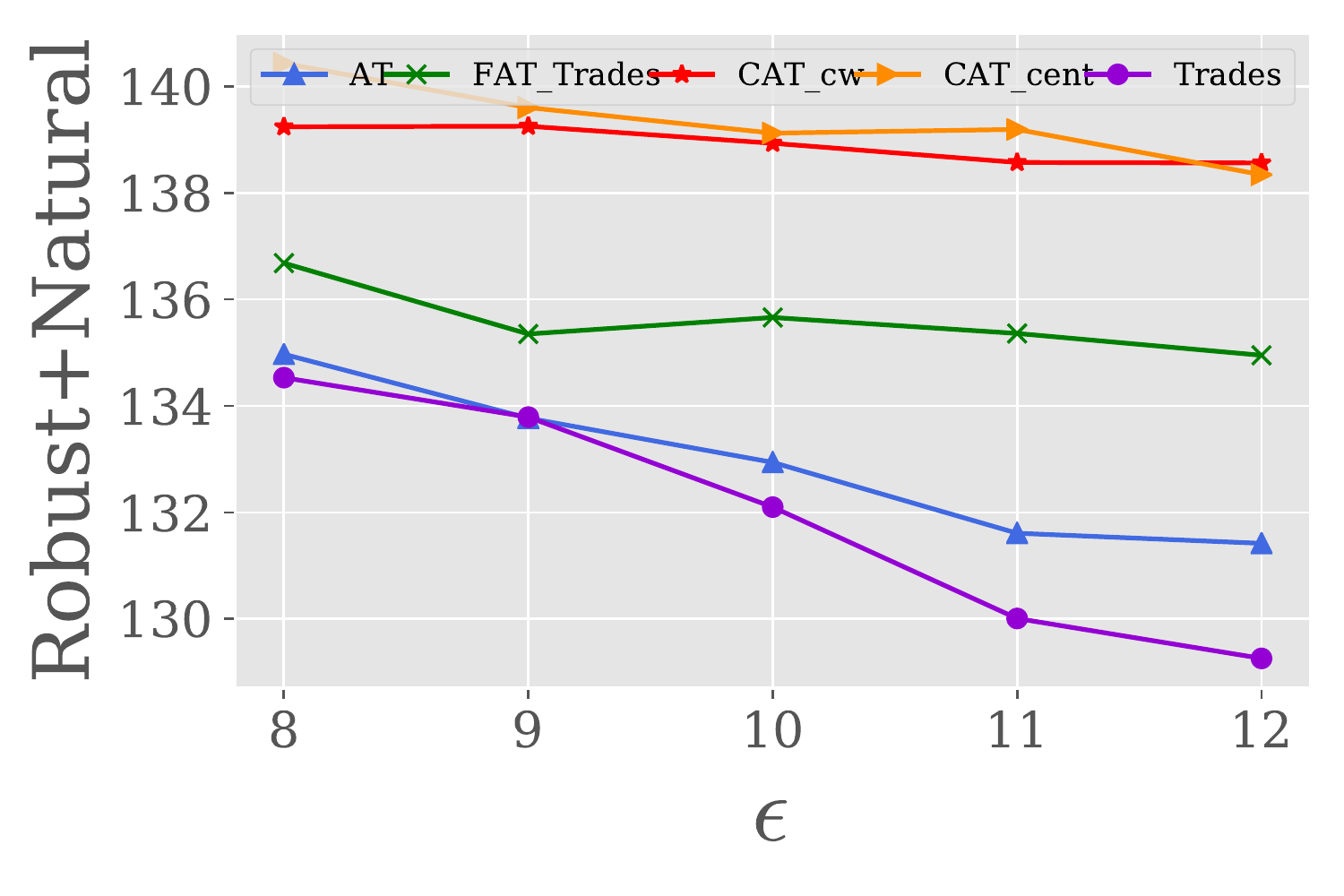} \label{fig:eps:b}
}
\caption{Evaluation on models trained with larger $\epsilon$. Robust accuracy is calculated by PGD-100 attack without random start.$\beta_{1}$ is fixed to 0.1 for $CAT_{cw}$ and $CAT_{cent}$.}
\vspace*{-0.2in}
\label{fig:epsilons}
\end{figure}

\subsubsection{Ablation Study}
We empirically verify the effect of weighted cross-entropy loss and soft mask $M$. Besides, we compare the effect of different loss functions selected in Eq.~\ref{eq.6} for generating adversarial examples. 

\textbf{Effect of the weighted cross-entropy loss and mask $M$} We remove $M$ by replacing  $X_{cali}'$ with $X'$ and remove $L(f_{\theta}(X),Y)\cdot(1-P(Y=y|X))$ by replacing it with $L(f_{\theta}(X),Y)$. We train PreAct ResNet-18 models based on CAT$_{cent}$ with removing both weighted cross-entropy loss and $M$ (marked as A1 model), and with removing $M$ only (marked as A2 model). We plot natural accuracy and robust accuracy on CIFAR-10 test set. Robust accuracy is computed by PGD-10 with random start ($\alpha=2/255,\epsilon=8/255$). Results are reported in Figure~\ref{fig:ablation_componments}. It can be observed that after removing soft mask $M$, there is a clearly decrease in natural accuracy and overall performance (natural+robust accuracy). Furthermore, after removing weighted cross-entropy loss, there is a slight decrease in natural accuracy.  

\textbf{Comparison of different loss functions} There are many choices for the surrogate loss in Eq.~\ref{eq.6} used to generate adversarial examples, e.g., cross-entropy loss, KL divergence used in Trades~\cite{pmlr-v97-zhang19p}, CW$_\infty$ loss. Here we evaluate the effect of these three losses in our method. We plot robust accuracy on CIFAR-10 test set for $\beta_{1}=0.1$ and $\beta_{1}=0.05$ respectively, and robust accuracy is calculated by PGD-10 attack with random start ($\alpha=2/255,\epsilon=8/255$). The experiments are based on PreAct ResNet-18 model. Results are showed in Figure~\ref{fig:ablation_losses} and it can be seen that $\mathbf{KL}$ divergence is less effective in achieving robustness than cross-entropy loss and CW$_\infty$ loss for both $\beta_{1}=0.1$ and $\beta_{1}=0.05$ settings. 

\begin{figure}[htb]
\centering
\vspace{-0.1in}
\subfigure[Natural]{\includegraphics[width=0.4\textwidth]{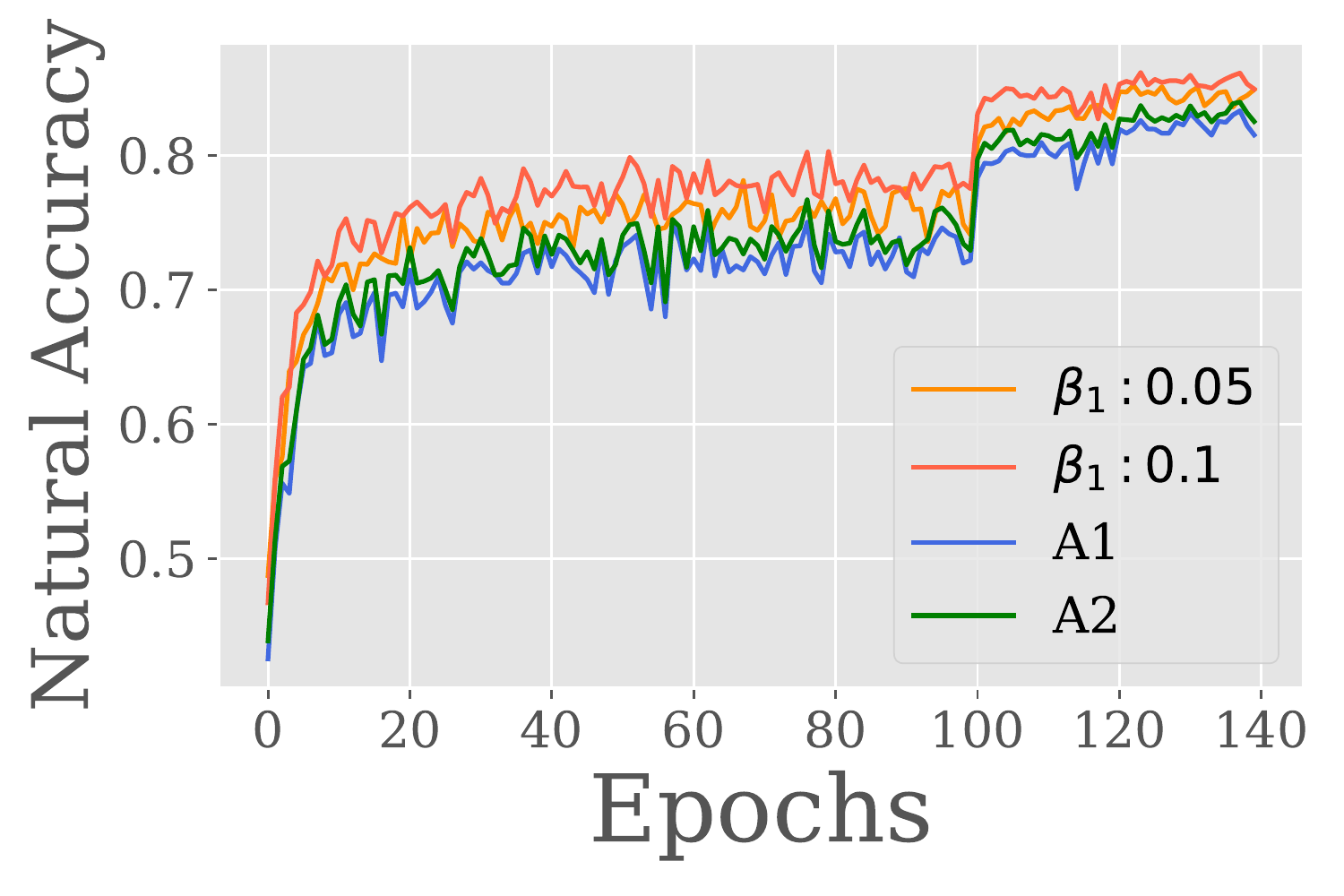}
}
\subfigure[Natural+Robust]{\includegraphics[width=0.4\textwidth]{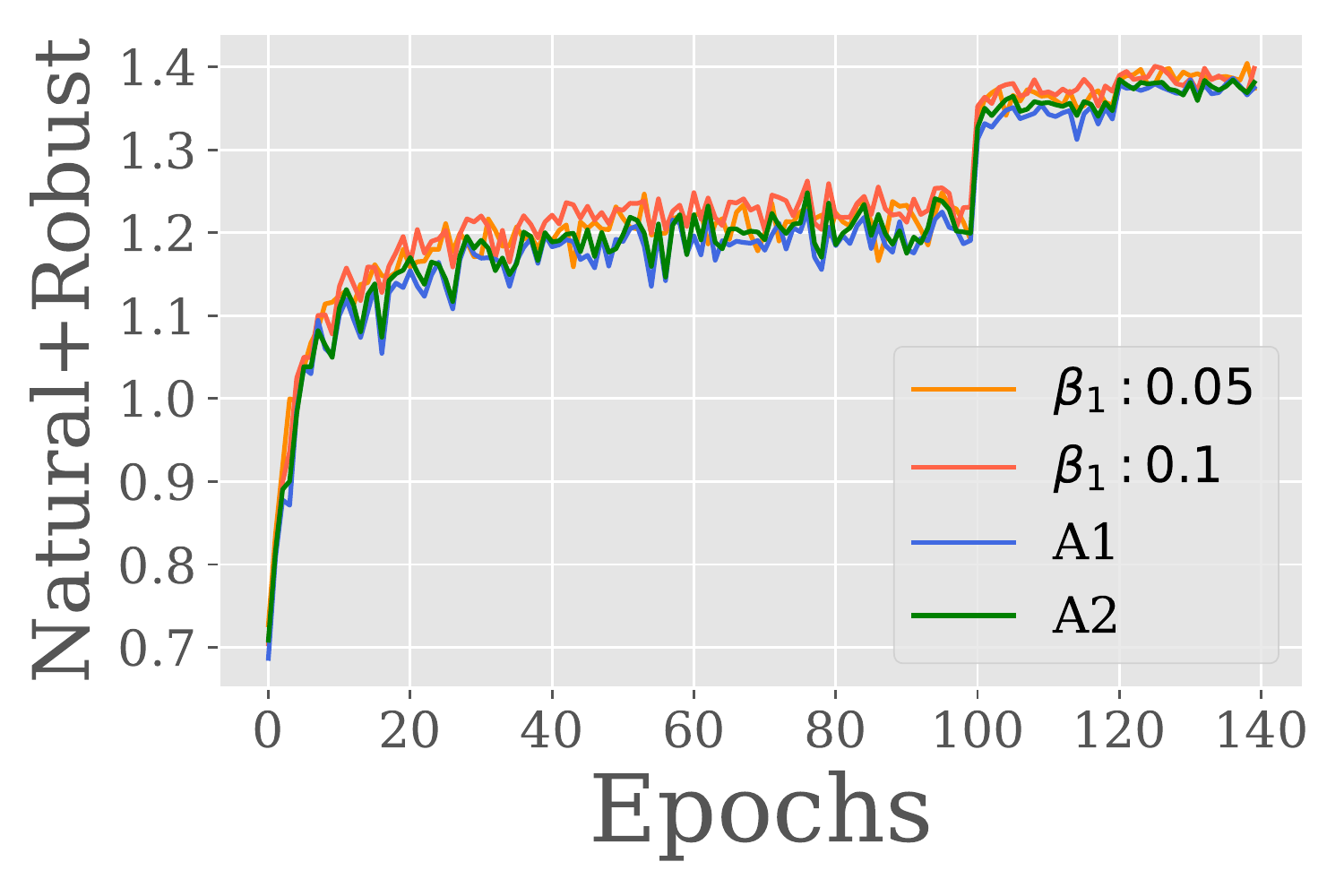}}
\caption{The ablation Experiments. A1: Model trained by CAT$_{cent}$ with removing both soft mask $M$  and $(1-P(Y=y|X))$. A2: Model trained by CAT$_{cent}$ with removing  soft mask $M$ only. $\beta_{1}:0.1,0.05$ denote models trained by CAT$_{cent}$ with setting $\beta_{1}=0.1,0.05$ respectively.}
\label{fig:ablation_componments}
\end{figure}

\begin{figure}[htb]
\centering
\vspace{-0.1in}
\subfigure[$\beta_{1}=0.05$]{\includegraphics[width=0.4\textwidth]{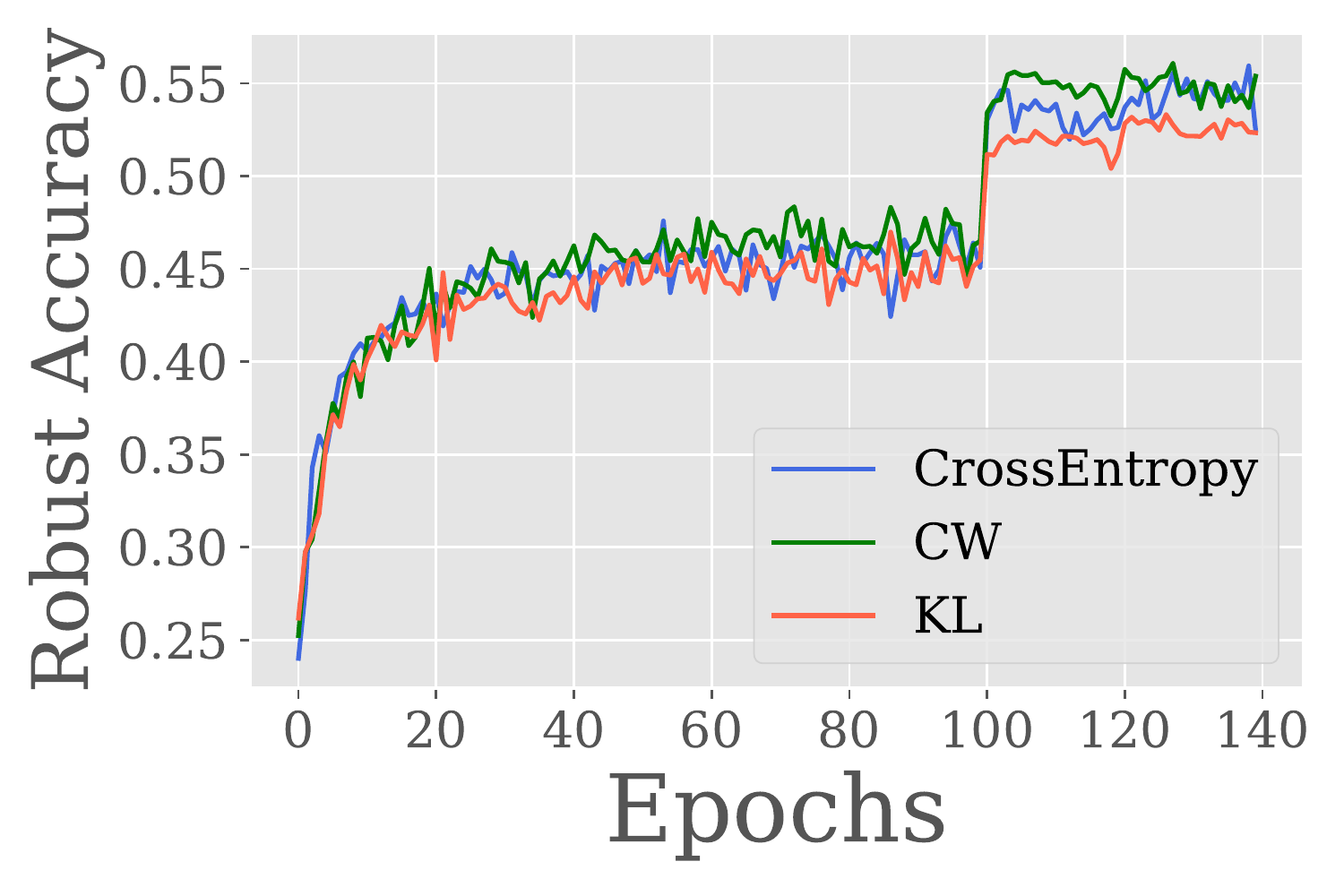}
}
\subfigure[$\beta_{1}=0.1$]{\includegraphics[width=0.4\textwidth]{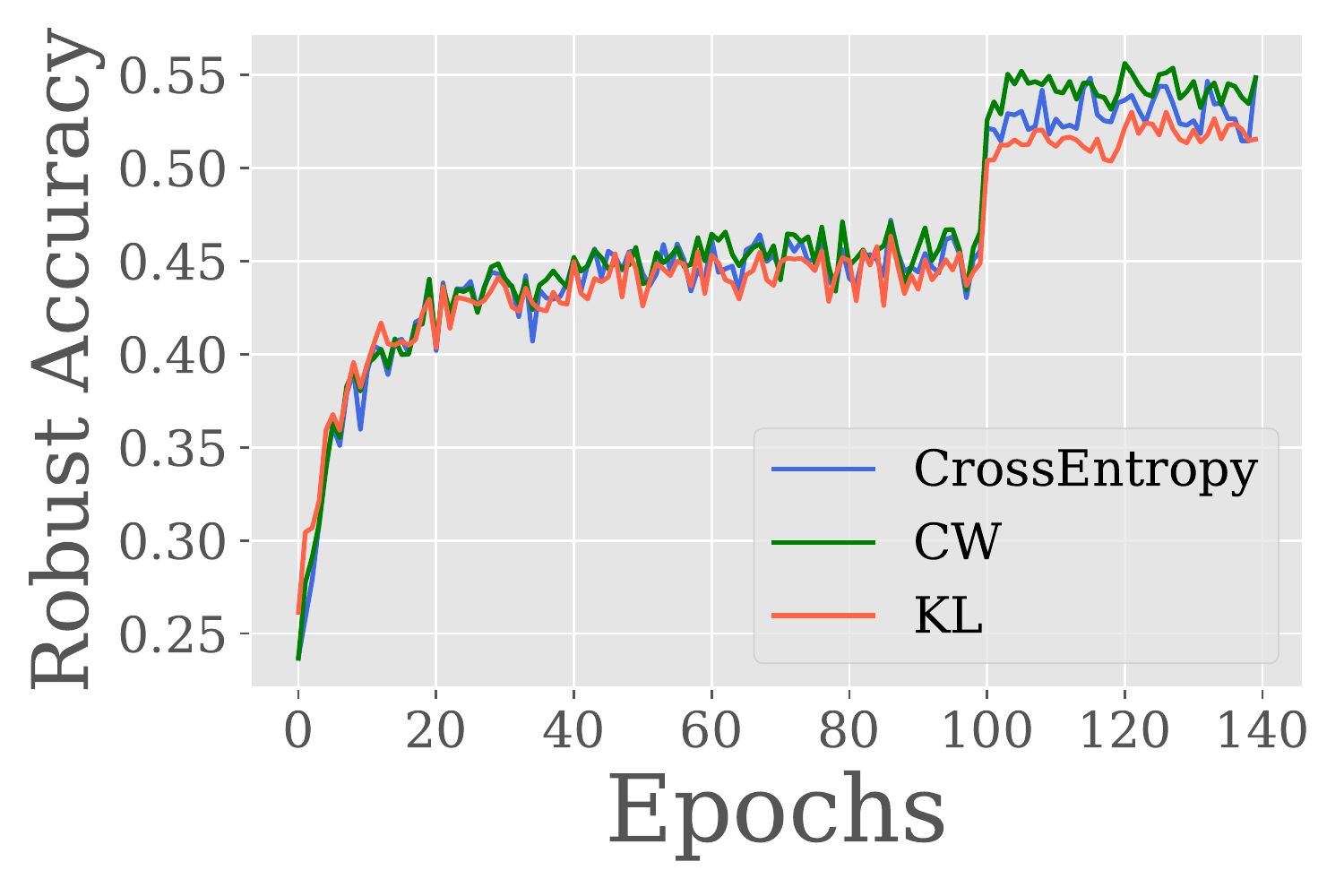}}
\caption{Comparison of different loss functions on achieving adversarial robustness.}
\label{fig:ablation_losses}
\end{figure}

\subsubsection{Analysis for Hyper-parameter $\beta_{1}$}
There are two hyper-parameters, $\beta$ and $\beta_{1}$, in our method. $\beta$ has the same effect as $\lambda$ in MART~\cite{wang2019improving} and Trades~\cite{pmlr-v97-zhang19p}. It controls the strength of the regularization for robustness. The analysis for $\beta$ can be found in Appendix~\ref{aly_beta}. $\beta_{1}$ controls the strength that pushes calibrated adversarial examples to be the same class of the input X. In this section, we mainly show the effect of $\beta_{1}$ on robust accuracy and natural accuracy. We train models with $\beta_{1}$ varying from 0.001 to 0.3 based on PreAct ResNet-18 architecture. The robust accuracy is calculated on CIFAR-10 test set by PGD-20 attack without random start. 

The trends are showed in Figure~\ref{fig:hyperAnalaysis}. 
The concrete values can be found in Table~\ref{tab:beta} (Appendix~\ref{aly_beta}). From Figure~\ref{fig:hyperAnalaysis}, it can be observed that when increasing the value of $\beta_{1}$, natural accuracy has remarkable growth. Meanwhile, PGD+Natural accuracy increases when $\beta_{1}$ is from  $0.01$ to $0.1$, which implies that calibrated adversarial examples release the negative effect of adversarial examples to some degree. With continuously increase $\beta_{1}$, there is a large drop in robust accuracy. It is because a large $\beta_{1}$ will reduce adversarial perturbation strength. However, it can be observed that there is a good trade-off for large $\beta_{1}$ between natural accuracy and robust accuracy. For example, with $\beta_{1}=0.3$, CAT$_{cw}$ achieves $88.08 \pm 0.07$ for natural accuracy while keeps $51.46 \pm 0.11$ for robust accuracy, which is much better than the trade-off achieved by Trades~\cite{pmlr-v97-zhang19p} where natural accuracy is 87.91 and robust accuracy is 41.50~\footnote{Results are copied from~\cite{pmlr-v97-zhang19p}}. 

\begin{figure}[htb]
\centering
\vspace{-0.1in}
\subfigure[CAT$_{cent}$]{\includegraphics[width=0.4\textwidth]{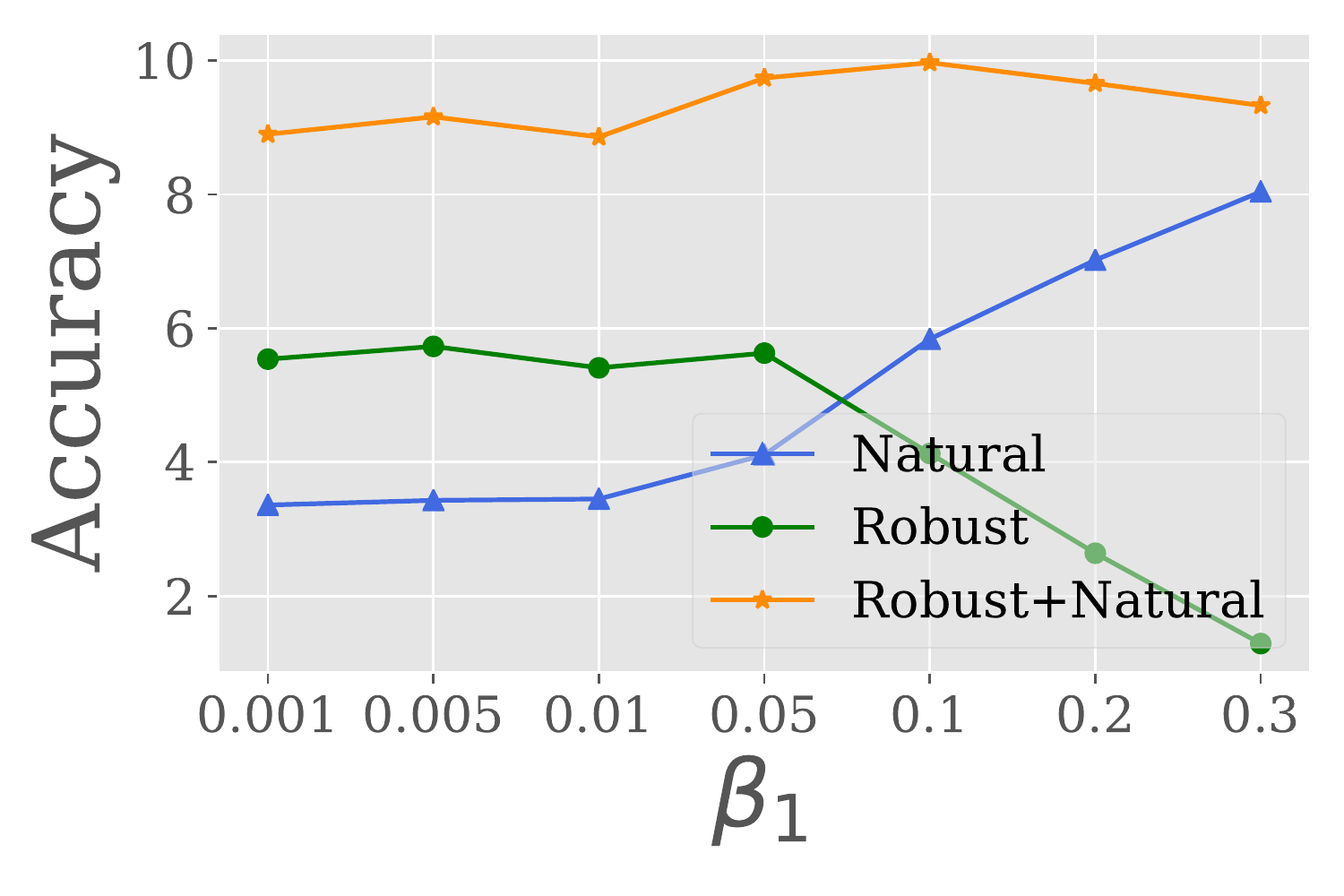}
}
\subfigure[CAT$_{cw}$]{\includegraphics[width=0.4\textwidth]{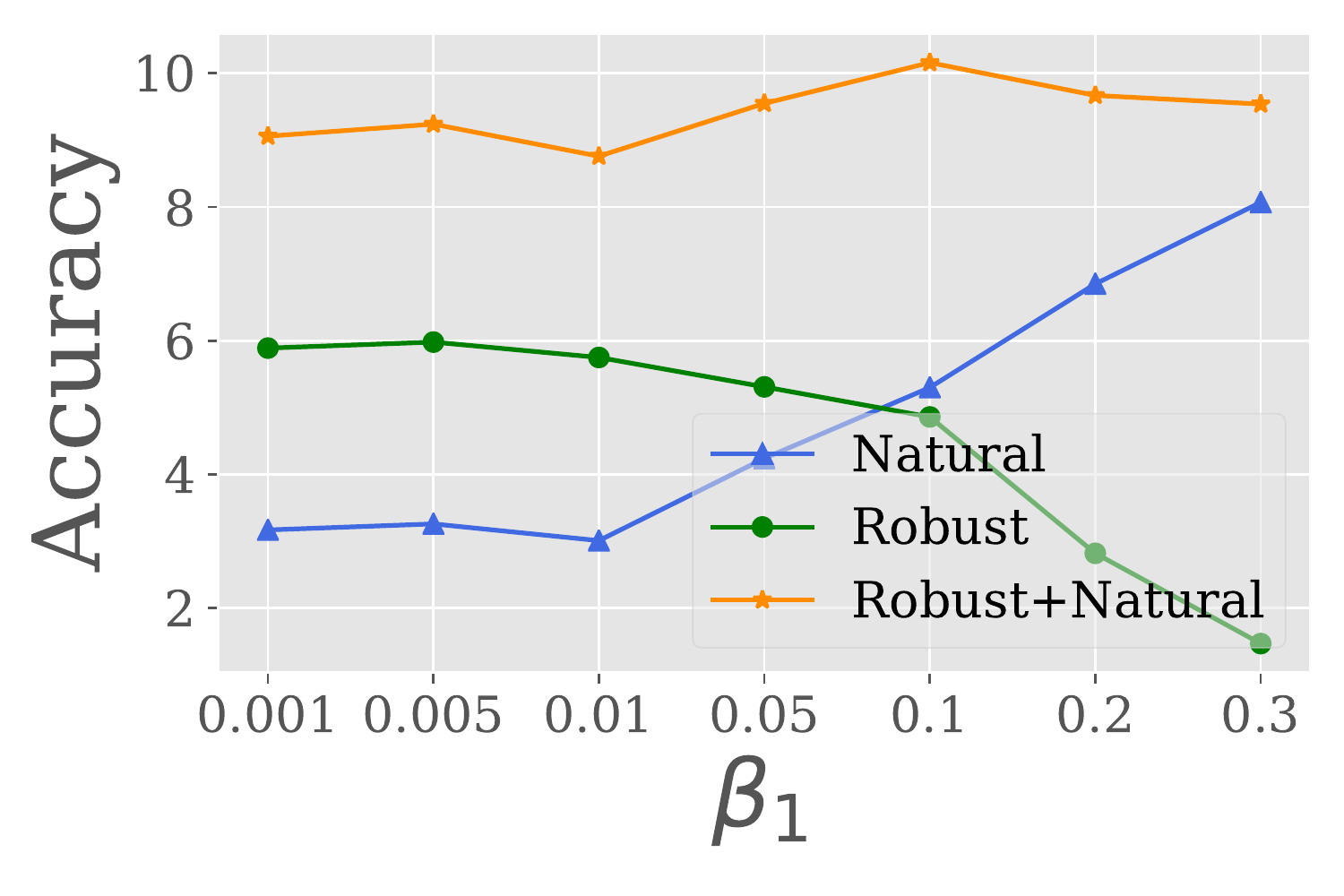}
}
\caption{Impact of hyper-parameter $\beta_{1}$ on the performance of natural accuracy and robust accuracy. Note: The natural accuracy showed in the figure is $(natural\; accuracy -80)$ and the robust accuracy showed in the figure is $(robust\; accuracy-50)$.}
\label{fig:hyperAnalaysis}
\end{figure}

\section{Conclusion}
In this paper we proposed a  new definition of robust error, i.e.\ calibrated robust error for adversarial training. We derived an upper bound for it, and enabled a more effective way of adversarial training that we call calibrated adversarial training. Our extensive experiments demonstrate that the new method improves natural accuracy with a large margin, and achieves the best performance under both white-box and black-box attacks among all considered state-of-the-art approaches. Our method also has a good trade-off between natural accuracy and robust accuracy.


\bibliography{acml21}

\appendix
\section{Proof} \label{proof}
\subsection{Proof for Theorem 1}\label{proofR}

\begin{proof}
We denote the set $S_{R}=\{(X,Y)|\forall (X,Y)\sim D, \exists X' \in \mathcal{B}(X,\epsilon)\;s.t. \; f_{\theta}(X')Y \leq 0 \}$ and $S_{CaliR}=\{(X,Y) |\forall (X,Y)\sim D, \exists X' \in \mathcal{B}(X,\epsilon)\; s.t. \; f_{\theta}(X')f_{oracle}(X') \leq 0 \}$.

\textbf{ 
 Since $S_{R} \subseteq S_{CaliR}$ $\implies$ $\mathcal{R}_{rob}(f) \leq \mathcal{R}_{cali}(f)$, we only need to prove $S_{R} \subseteq S_{CaliR}$}. 
\begin{align}
&\mathbf{\forall (X,Y) \in S_{R}}, \notag \\
&\mathbf{(1) \;if \;  f_{\theta}(X)Y \leq 0},
 then \; f_{\theta}(X)f_{oracle}(X) \leq 0  \implies X \in S_{CaliR}. \notag \\
&\mathbf{(2) \;if \; f_{\theta}(X)Y>0}, then \; \exists X' \in \mathcal{B}(X,\epsilon)\; s.t\; f_{\theta}(X')Y \leq 0; \notag\\
&\; \; \; \; \mathbf{1)if \; f_{oracle}(X') f_{\theta}(X') \leq 0} \implies X \in S_{CaliR} \notag \\
&\; \; \; \; \mathbf{2) if \; f_{oracle}(X') f_{\theta}(X')>0},\; then\; it \; must \; have: \; \notag \\ 
&\; \; \; \;\; \; \; \;\;\;\;\; \exists  X^{''} \in \mathcal{B}(X,\epsilon)\; s.t.  f_{\theta}(X^{''})f_{oracle}(X^{''}) \leq 0 \label{oriprob}; 
\end{align}
\emph{\;\;\;\;\;\;\;\;\;\;\;\;\;\;\;\;\;\;\;\;\;\;\;We prove Eq.~\ref{oriprob}  by the contradiction method. We assume}:\\
\begin{align}
\begin{split}
\forall  X^{''} \in \mathcal{B}(X,\epsilon)\; s.t.  f_{\theta}(X^{''})f_{oracle}(X^{''})>0 \; is \; True. \label{localassm1}
\end{split}
\end{align}
\begin{align}
\begin{split}
\;\;\;\;\;\;\;\mathbf{f_{\theta}(X)Y > 0,f_{\theta}(X')Y \leq 0} \implies the\;decision\; boundary\;of\; f_{\theta} \notag \\  
crosses\; the\; \epsilon -norm\; ball\; of\; X.  \notag\\
\end{split}\\
\begin{split}
\;\;\;\;\;\;\;\mathbf{f_{\theta}(X')Y \leq 0, f_{oracle}(X')f_{\theta}(X')>0} \implies  the\;decision\; boundary \notag \\ of\; f_{oracle}\;crosses\;the\; \epsilon -norm\; ball\; of\; X.  \notag
\end{split}
\end{align}
\begin{align}
\begin{split}
\;\;\;\;\;\;\;&\mathbf{If\; Eq.\ref{localassm1} \; is \;true},which\; implies\;that\;  f_{\theta}\; and\; f_{oracle}\; have\;the\;\notag \\ \;\;\;\;\;\;\;&\;same\; prediction\; on\;any\;sample\;from\;the\;\epsilon\;-ball\;of\; X.\notag \\
\;\;\;\;\;\;\;&\implies \;the\; decision\;   boundaries\; of\; f_{\theta}\; and\;f_{oracle}\; will\; be\; \notag \\ 
\;\;\;\;\;\;\;&completely\; overlapped\; in \; \epsilon\;-ball \; of\;X,\;\mathbf{ which\; contradicts}\; 
 \notag \\
\;\;\;\;\;\;\;&the\; assumption\; of\;  the\; Theorem\; 1: the\;
decision\; boundaries \notag \\ 
\;\;\;\;\;\;\;& of\;  f_{\theta}\; and\; f_{oracle}\; are\; not\; overlapped.
\end{split}\\
\begin{split}
\;\;\;\;\;\;\;&\mathbf{Therefore\; Eq.\ref{localassm1}\; is\; False}\;. \notag \\
\;\;\;\;\;\;\;&\implies \exists  X^{''} \in \mathcal{B}(X,\epsilon)\; s.t.  f_{\theta}(X^{''})f_{oracle}(X^{''}) \leq 0; Eq.~\ref{oriprob}\; is\; proved.\\
\;\;\;\;\;\;\;&\implies X \in S_{CaliR}. \notag
\end{split}
\end{align}
By now, we proved $\forall X \in S_{R} \implies X \in S_{CaliR}$. Besides, $\exists X \in S_{CaliR} \implies X \notin S_{R}$, e.g. the sample $X$ in Fig.~\ref{fig:epsilonsProof:a}. Therefore $S_{R} \subseteq S_{CaliR}$ is proved.
\end{proof}
Besides going through a formal proof itself, we think it is useful to look into the provided visualization of the decision boundary for a more intuitive understanding. According the spatial relationship of decision boundaries of $f_{\theta}$ and $f_{oracle}$, it can be separated into intersection and non-intersection cases (no overlap case according to the assumption in Theorem 1), which are showed in Fig.~\ref{fig:epsilons}.  From Fig.~\ref{fig:epsilons}, for any sample (X,Y) from class 2, if $\exists X' \in \mathcal{B}(X,\epsilon)$ lies in the region filled with blue lines, it must have $\exists X^{''} \in \mathcal{B}(X,\epsilon)$ lies in the region filled with gray lines. However, if $\exists X^{'} \in \mathcal{B}(X,\epsilon)$ lies in the region filled with gray lines, it is possible that $\forall X^{'} \in \mathcal{B}(X,\epsilon)$ do not lie in the region filled with blue lines. Therefore $S_{R} \subseteq S_{CaliR}$.

\begin{figure}[htb]
\tiny
\centering
\subfigure[Intersect1]{\includegraphics[width=0.22\textwidth]{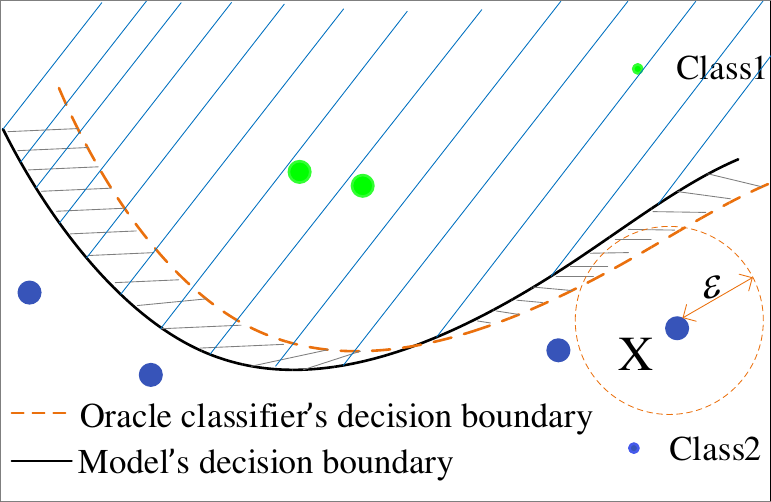} \label{fig:epsilonsProof:a}
}
\subfigure[Intersect2]{\includegraphics[width=0.22\textwidth]{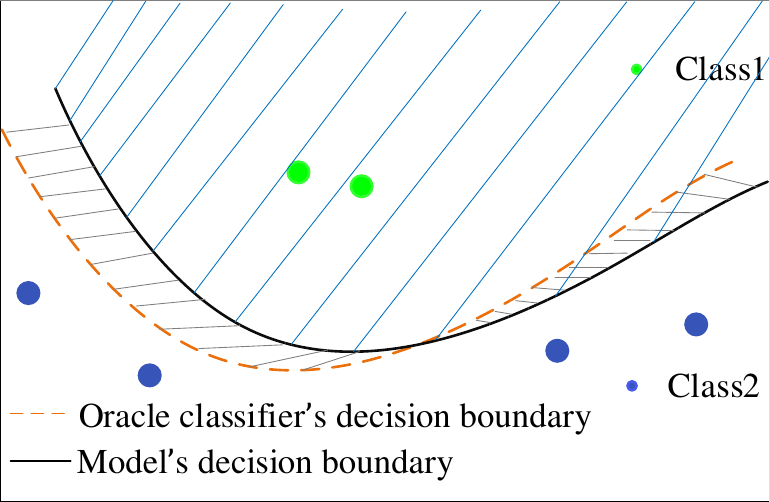} \label{fig:epsilonsProof:b}
}
\subfigure[Non-Intersect1]{\includegraphics[width=0.22\textwidth]{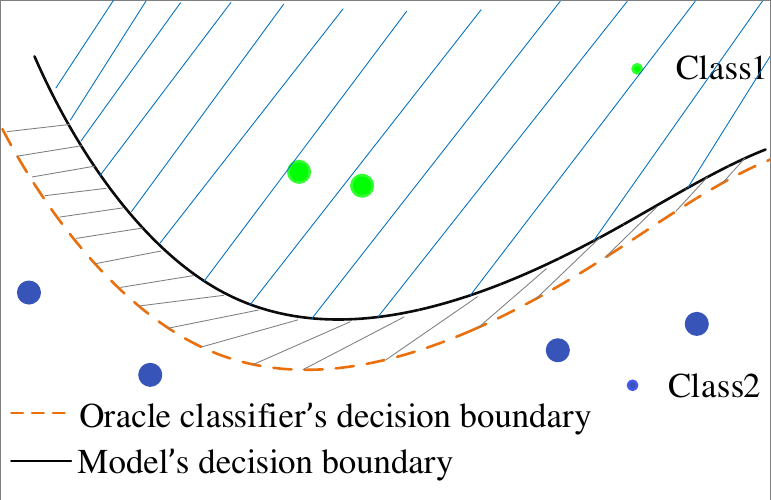} \label{fig:epsilonsProof:c}
}
\subfigure[Non-Intersect2]{\includegraphics[width=0.22\textwidth]{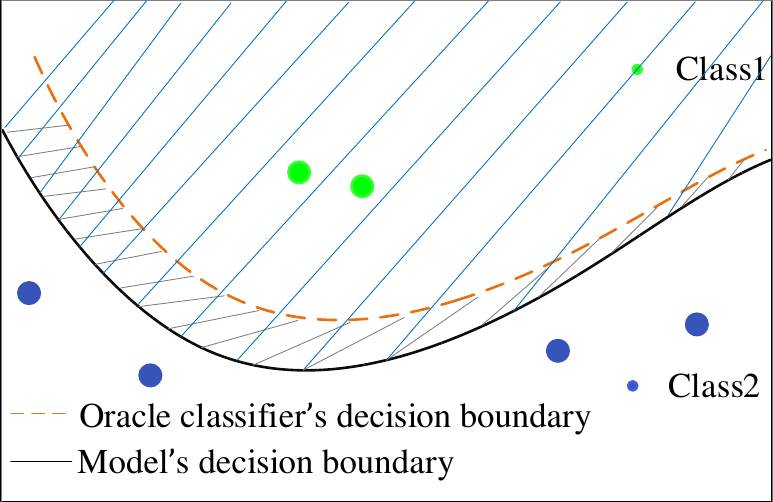} \label{fig:epsilonsProof:d}
}
\vskip -0.1in
\caption{Visualization of $f_{\theta}$ and $f_{oracle}$ decision boundaries. Region filled with gray lines: $\{ X'|f_{\theta}(X')f_{oracle}(X')\leq 0 \}$. Region filled with blue lines: $\{X'| f_{\theta}(X')Y\leq 0, Y=class2\}$.}
\label{fig:epsilonsProof}
\vskip -0.1in
\end{figure}

\subsection{Proof for Theorem 2}\label{proofTheo}
\upperbound*
\begin{proof}
\begin{align}
&\mathcal{R}_{cali}(f)-\mathcal{R}^* =\mathbf{E}_{(X,Y) \sim D} \bm{1} \{\exists X' \in \mathcal{B}(X,\epsilon) \; s.t. \; f_{\theta}(X')f_{oracle}(X') \leq 0 \}  \notag \\ 
 &=\mathbf{E}_{(X,Y) \sim D} \bm{1}\{\exists X' \in \mathcal{B}(X,\epsilon) \; s.t. \; f_{\theta}(X')f_{oracle}(X') \leq 0,f_{\theta}(X)Y \leq 0 \} \notag\\
 &\qquad \qquad +\mathbf{E}_{(X,Y) \sim D} \bm{1} \{\exists X' \in \mathcal{B}(X,\epsilon) \; s.t. \; f_{\theta}(X')f_{oracle}(X') \leq 0,f_{\theta}(X)Y > 0 \}-\mathcal{R}^*  \notag\\
 &=\mathbf{E}_{(X,Y) \sim D} \bm{1}\{f_{\theta}(X)Y \leq 0 \}-\mathcal{R}^* \notag \\
 &\qquad \qquad+\mathbf{E}_{(X,Y) \sim D} \bm{1} \{\exists X' \in \mathcal{B}(X,\epsilon) \; s.t. \; f_{\theta}(X')f_{oracle}(X') \leq 0,f_{\theta}(X)Y > 0 \} \notag \\
 &\Scale[0.9]{\leq \psi^{-1}(\mathcal{R}_{\phi}(f)-\mathcal{R}_{\phi}^{*})+\mathbf{E}_{(X,Y) \sim D} \bm{1} \{\exists X' \in \mathcal{B}(X,\epsilon) \; s.t. \; f_{\theta}(X')f_{oracle}(X') \leq 0,f_{\theta}(X)Y > 0 \}} \notag \\
 & \leq \psi^{-1}(\mathcal{R}_{\phi}(f)-\mathcal{R}_{\phi}^{*})+\mathbf{E}_{(X,Y) \sim D} \bm{1} \{\exists X' \in \mathcal{B}(X,\epsilon) \; s.t. \; f_{\theta}(X')f_{oracle}(X') \leq 0 \} \notag \\
 & \leq \psi^{-1}(\mathcal{R}_{\phi}(f)-\mathcal{R}_{\phi}^{*}) +\mathbf{E}_{(X,Y) \sim D} \max_{X' \in \mathcal{B}(X,\epsilon)} \bm{1}\{f_{\theta}(X')f_{oracle}(X') \leq 0\} \notag \\
 & \leq \psi^{-1}(\mathcal{R}_{\phi}(f)-\mathcal{R}_{\phi}^{*}) +\mathbf{E}_{(X,Y) \sim D} \max_{X' \in \mathcal{B}(X,\epsilon)} \phi(f_{\theta}(X')f_{oracle}(X')) \notag \\
 & Let \;\; f_{oracle}(X')=Y,\; then, \notag \\
 &\mathcal{R}_{cali}(f)- \mathcal{R}^*\leq \psi^{-1} (\mathcal{R}_{\phi}(f)-\mathcal{R}_{\phi}^{*})+\mathbf{E}_{(X,Y) \sim D} \max_{\substack{X'\in \mathbf{B}(X,\epsilon)\\ f_{oracle}(X')=Y}} \phi(f_{\theta}(X')Y) \notag
\end{align}
\end{proof}
The first inequality holds when $\phi$ is a classification-calibrated loss~\cite{pmlr-v97-zhang19p,bartlett2006convexity}. Classification-calibrated loss contains the cross-entropy loss, hinge loss, $\mathbf{KL}$ divergence and etc.

\section{Training Strategy}\label{algo}
There are two neural networks to be trained: $f_{\theta}$ and $g_{\varphi}$. $f_{\theta}$ is the neural network that we want to obtain and $g_{\varphi}$ is the auxiliary neural network for generating soft mask $M$. In practice, we train these two neural networks in turn. Specifically, in each step, we firstly update $f_{\theta}$ using Eq.~\ref{f_objective}, then update $g_{\varphi}$ using Eq.~\ref{g_objective}. More details can be found in Algorithm~\ref{alg:training}.

The pseudocode of our method is showed in Algorithm~\ref{alg:training}.
\begin{algorithm}[tb]
   \caption{Calibrated adversarial training}
   \label{alg:training}
\begin{algorithmic}
   \STATE {\bfseries Input:} neural network $f_{\theta}$, neural network $g_{\varphi}$, training dataset $(X,Y) \in D$.
   \STATE {\bfseries Output:} adversarial robust network $f_{\theta}$
   \FOR{$epoch=1$ {\bfseries to} $T$}
       \FOR{$mini$-$batch$=1 {\bfseries to} $M$}
       \STATE Generate $X'=X+\delta$ using PGD or C\&W$_{\infty}$ attack
       \STATE Obtain  $X_{cali}'$ using Eq.~\ref{eq.7}
       \STATE Update $\theta$ by back-propagating  Eq.~\ref{f_objective}
       \STATE Update $\varphi$ by back-propagating  Eq.~\ref{g_objective}
       \ENDFOR
   \ENDFOR
\end{algorithmic}
\end{algorithm}

\section{Implement Details} \label{imple}
\subsection{MNIST}
We  copy the model architecture for $f_{\theta}$ from \url{https://adversarial-ml-tutorial.org/adversarial_training/}.

\textbf{Model architecture for $f_{\theta}$ and $g_{\varphi}$}: 
The architecture of $f_{\theta}$ and $g_{\varphi}$ are showed in Table~\ref{archi:mnist}.

\begin{table*}[h!]
\caption{Model architecture (MINST)}
\vspace{-0.2 in}
\label{archi:mnist}
\begin{center}
\begin{small}
\begin{sc}
\begin{tabular}{cccc}
\toprule
\multicolumn{2}{c}{$f_{\theta}$}&\multicolumn{2}{c}{$g_{\varphi}$}\\
\toprule
 Layer name &Neurons  & Layer name &Neurons \\
\midrule
Conv layer &32 &  Conv layer& 64  \\
Conv layer&32  &Conv layer&128   \\
Conv layer&64 &Conv layer& 128 \\
Conv layer&64 &Up sampling& (28*28) \\
FC layer&(7*7*64)X100 &Conv layer  & 1  \\
FC layer& 100X10&Sigmoid & - \\
\bottomrule
\end{tabular}
\end{sc}
\end{small}
\end{center}
\vskip -0.1in
\end{table*}

\textbf{Hyper-parameters settings for training our method:} Epochs:40, optimizer:Adam, The initial learning rate is 1e-3 divided by 10 at 30-th epoch. we set $k=150$ for $CW_{\infty}$ loss in  CAT$_{cw}$. Other hyper-parameters are described in Section~\ref{Hyperps}. 

\subsection{CIFAR-10/CIFAR-100}
\textbf{Model architecture for $g_{\varphi}$}:
The architecture of $g_{\varphi}$ is showed in Table~\ref{archit:cifar10_100}.
\begin{table*}[htp]
\caption{Model architecture $g_{\varphi}$ (CIFAR-10/CIFAR-100)}
\vspace{-0.2 in}
\label{archit:cifar10_100}
\begin{center}
\begin{small}
\begin{sc}
\begin{tabular}{cc}
\toprule
 Layer name &Neurons  \\
\midrule
ResNet-18 without FC layer & -  \\
Up Sampling&(32*32)   \\
Conv layer& 3 \\
Sigmoid & -\\
\bottomrule
\end{tabular}
\end{sc}
\end{small}
\end{center}
\vskip -0.1in
\end{table*}

\textbf{Hyper-parameters for training our method}: 
For CIFAR-10, we use SGD optimizer with momentum 0.9, weight decay 5e-4 and an initial learning rate of 0.1, which divided by 10 at 100-th and 120-th epoch. Total epochs:140.

For CIFAR-100,we use SGD optimizer with momentum 0.9, weight decay 5e-4 and an initial learning rate of 0.1, which divided by 10 at 100-th and 110-th epoch. Total epochs:120.

we set $k=50$ for $CW_{\infty}$ loss in CAT$_{cw}$. Other hyper-parameters are described in Section~\ref{Hyperps}. 

\textbf{Baselines}
We run the official code for baselines and all hyper-parameters are set to the values reported in their papers.
\begin{itemize}
    \item \textbf{AT}, which is described in Section~\ref{sat}. We adopt the implementation in ~\cite{rice2020overfitting} with early stop, which can achieve better performance. we use the implementation in~\cite{rice2020overfitting}. \url{https://github.com/locuslab/robust_overfitting}.
    \item \textbf{Trades}~\cite{pmlr-v97-zhang19p}.It separates loss function into cross-entropy loss for natural accuracy and a regularization terms for robust accuracy. The official code: \url{https://github.com/yaodongyu/TRADES}.
    \item  \textbf{MART}~\cite{wang2019improving}. It incorporates an explicit regularization into the loss function for misclassified examples. The official code: \url{https://github.com/YisenWang/MART}.
    \item  \textbf{FAT}~\cite{pmlr-v119-zhang20z}. It constructs the objective function based on adversarial examples that close to the model's decision boundary. We adopt \emph{FAT for TRADES} as a baseline since it achieves better performance. The official code:\\ \url{https://github.com/zjfheart/Friendly-Adversarial-Training}.
\end{itemize}
All trained models in our experiments are trained in a single Nvidia Tesla V100 GPU and selected at the best checkpoint where the sum of robust accuracy (under PGD-10) and natural accuracy is highest.

\section{Experiments on CIFAR-100}\label{exp100}
This section shows the performance of our method on CIFAR-100. The test settings are the same as Table~\ref{tab:RN_CIFAR} and Table~\ref{tab:WRN_CIFAR}. Results are reported in Table~\ref{tab:cifar100}. From Table~\ref{tab:cifar100}, we can see that our method improves natural accuracy and robust accuracy compared with AT, which further shows the evidence that our method are effective in achieving natural and robust accuracy.

\begin{table}[ht]
\caption{Evaluation on CIFAR-100 (PreAct ResNet-18).}
\vspace{-0.2 in}
\begin{center}
\begin{small}
\begin{sc}
\begin{tabular}{lcccccc}
\toprule
Models & Natural &FGSM& PGD-20 & PGD-100 &C\&W$_{\infty}$&Avg\\
\midrule
AT &55.13 & 29.77&27.51&27.01&25.91& 33.06\\
CAT$_{cent}$($\beta_{1}=0.05$)&58.52&31.45 &28.93&28.48&25.55& 34.59\\
CAT$_{cent}$($\beta_{1}=0.1$) &59.77 &30.33 &27.16  &26.62&24.17&33.61\\
CAT$_{cw}$($\beta_{1}=0.05$)&58.9&\textbf{31.51} &\textbf{29.11} &\textbf{28.5}&\textbf{26.25}& \textbf{34.85}\\
CAT$_{cw}$($\beta_{1}=0.1$) & \textbf{60.37}&31.04 &28.31 &27.88&25.67&34.65 \\
\bottomrule
\end{tabular}
\end{sc}
\end{small}
\end{center}
\label{tab:cifar100}
\vskip -0.1in
\end{table}

\section{Analysis for Hyper-parameter $\beta$}\label{aly_beta}
In this section, we conduct experiments for hyper-parameter $\beta$ with varying from 1 to 5. Test settings are the same as Figure~\ref{fig:hyperAnalaysis}.  Results are showed in  Table~\ref{tab:beta}. Besides, we also report results for hyper-parameter $\beta_{1}$ with varying from 0.05 to 0.3. 

From Table~\ref{tab:beta}, it can be observed that $\beta$ also controls a trade-off between natural accuracy and robust accuracy. The larger $\beta$ value leads to a larger robust accuracy but with a smaller natural accuracy. By comparing with $\beta_{1}$, we can see that adapting $\beta_{1}$ can achieve a better trade-off than adapting $\beta$. Therefore, for our method, we suggest to fix $\beta$ to large value, e.g., $\beta=5$, then adapt $\beta_{1}$ to achieve the trade-off that we want.  

\begin{table}[h!]
\caption{Impact of hyper-parameter $\beta$.}
\vspace{-0.2 in}
\begin{center}
\begin{small}
\begin{sc}
\begin{tabular}{cccccc|cccccc}
\toprule
\multicolumn{6}{c|}{CAT$_{cent}$}&\multicolumn{6}{c}{CAT$_{cw}$} \\
\toprule
$\beta$$^a$ & $Nat$ &PGD-20 & $\beta_{1}$$^b$ &$Nat$ & PGD-20 & $\beta$$^a$ & $Nat$ &PGD-20 & $\beta_{1}$$^b$ &$Nat$ & PGD-20\\
\midrule
 5&84.1 &55.6 &0.05&84.1&55.6&5 &84.2&55.2 &0.05&84.2&55.2\\
 4&85.0 &54.6&0.1&85.8&54.1&4 &84.5&55.1&0.1&85.3&54.9\\
 3&85.4 &53.7&0.2&87.0&52.6&3&85.5&53.5&0.2&86.9&52.8\\
 2&86.4 &51.7&0.3&88.0&51.1&2 &86.3&52.6&0.3&88.1&51.5\\
 1&86.8&51.1&-&-&-&1&87.6&50.8&-&-&-\\
\bottomrule
\end{tabular}
\end{sc}
\end{small}
\end{center}
\footnotesize{$a:$ Model is trained with fixing $\beta_{1}:0.05$. $b:$ Model is trained with fixing $\beta:5$. $Nat$ denotes Natural accuracy. }
\label{tab:beta}
\vskip -0.1in
\end{table}

\section{Difference Between Pixel-level Adapted and Instance-level Adapted Adversarial Examples}\label{difference}
It is assumed that we want to find adversarial examples on the decision boundary. As showed in Figure~\ref{fig:difference_adapt}, given the maximum perturbation bound is $\varepsilon$. Instance-level adapted adversarial examples will reduce $\varepsilon$ to $\varepsilon'$ and find the adapted adversarial examples $x_{0}'$ while pixel-level adapted adversarial examples may find $x_{1}'$ or $x_{2}'$ as long as it is on the decision boundary within the $\epsilon$-ball. In other words, pixel-level adapted adversarial examples could lead to more diversified adversarial examples.

\begin{figure}[h!]
\centering
\vspace{-0.2in}
\includegraphics[width=0.6\textwidth]{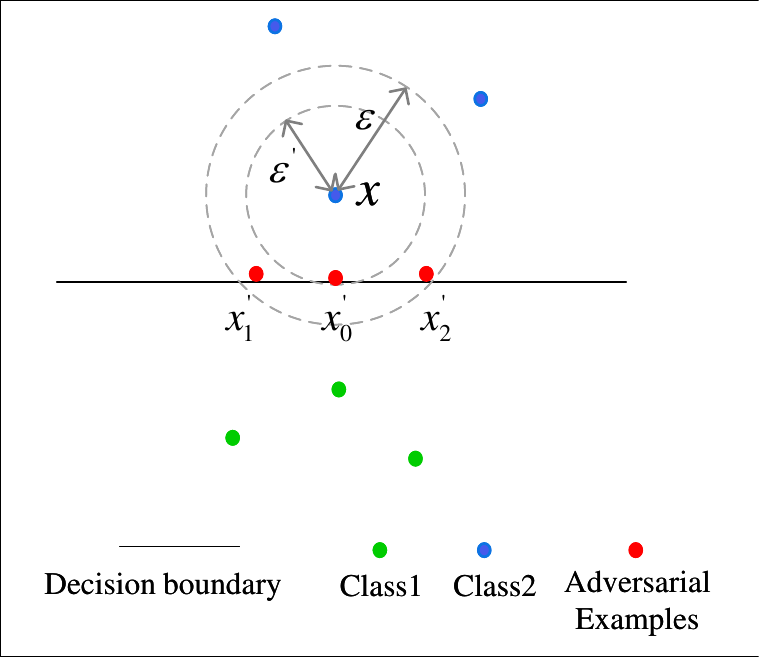}
\caption{Illustration of the difference between pixel-level adapted and instance-level adapted adversarial examples.}
\label{fig:difference_adapt}
\end{figure}

\section{Analysis for Differences Between the Proposed General Objective Function (Eq.~\ref{primalobj}) and the General Objective Function in~\cite{pmlr-v97-zhang19p}} \label{anlgeneralobj}
The general objective function derived on the upper bound of robust error is expressed as follows~\cite{pmlr-v97-zhang19p}:
\begin{align}
    \min_{\theta} \mathbf{E}_{(X,Y)} [\phi(f_{\theta}(X)Y)+\max_{X'\in \mathbf{B}(X,\epsilon)} \phi(f_{\theta}(X')Y)/\lambda ] \label{obj_robust}.
\end{align}
By comparing Eq.~\ref{obj_robust} and Eq.~\ref{primalobj}, the main difference is that there is an extra constraint $f_{oracle}(X')=Y$ for the inner maximization in our proposed general objective function. Therefore, we heuristically analyze the decision boundaries learned by these two general objective functions according to whether the constraint $f_{oracle}(X')=Y$ is active or not:
\begin{itemize}
    \item Given the oracle classifier's decision boundary does not cross the $\epsilon$-ball of input $X$, then the $f_{oracle}(X')=Y$ is an inactive constraint for the inner maximization and the proposed general objective function will be equivalent to Eq.~\ref{obj_robust}. As showed in Figure~\ref{fig:generalobj:b}, by minimizing the general objective function, the decision boundary would be transformed from black solid line to gray solid line in order to classify adversarial examples (red points) as ``Class 2''.
    \item Given the oracle decision boundary crosses the $\epsilon$-ball of input $x$, then the $f_{oracle}(X')=Y$ is an active constraint. As showed in Figure~\ref{fig:generalobj:a}, the constraint $f_{oracle}(X')=Y$ is active for input $x_{1}$. Therefore, the example generated by the inner maximization in the proposed general objective function will be $x_{1}^{*}$ (orange point) while the example generated by the inner maximization in Eq.~\ref{obj_robust} will be $x_{1}'$ (red point). By minimizing the general objective function, the decision boundary learned by the proposed general objective function could be the gray line since it try to classify $X_{1}^{*}$ as ``Class 2'' while the decision boundary learned by Eq.~\ref{obj_robust} could be the red line since it try to classify $x_{1}'$ as ``Class 2''.     
\end{itemize}
Intuitively, our proposed general objective function will push the learned decision boundary to be near the oracle classifier's decision boundary while the general objective function in~\cite{pmlr-v97-zhang19p} will push the learned decision boundary to be near the boundary of $\epsilon$-ball. 

\begin{figure}[htb]
\centering
\subfigure[Active]{\includegraphics[width=0.45\textwidth]{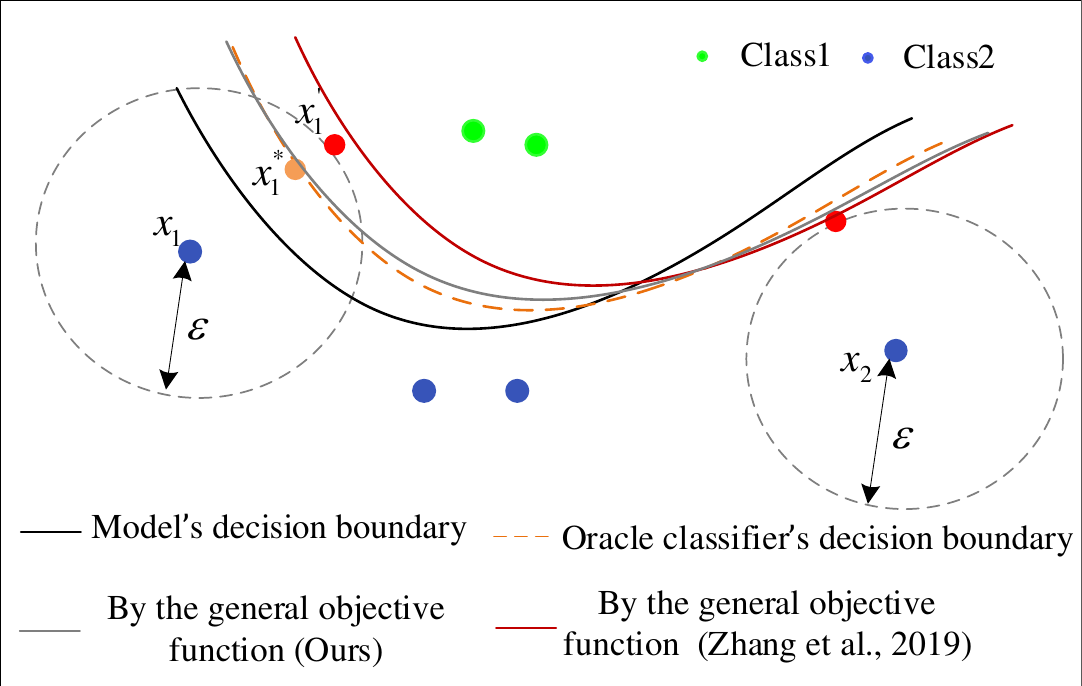}\label{fig:generalobj:a}}
\hspace{1em}
\subfigure[Inactive]{\includegraphics[width=0.45\textwidth]{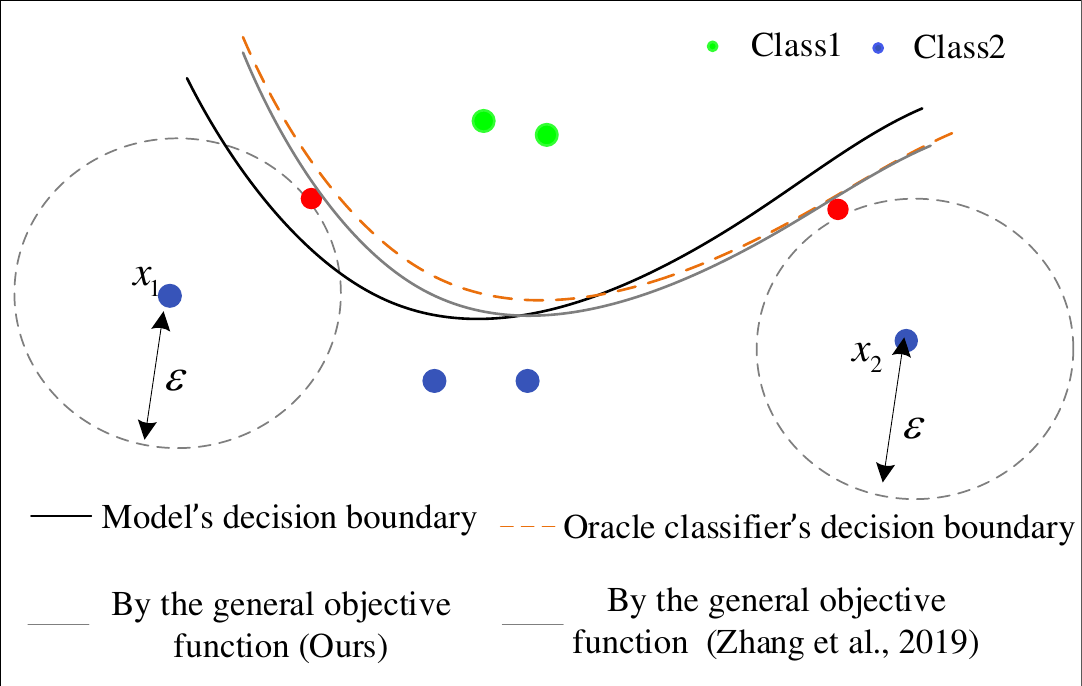}\label{fig:generalobj:b}}
\caption{Illustration for the decision boundaries learned by the proposed general objective function and the general objective function in~\cite{pmlr-v97-zhang19p}. The gray line in Figure~\ref{fig:generalobj:a} denotes the decision boundary learned by our proposed general objective function. The red line in Figure~\ref{fig:generalobj:a} denotes the decision boundary learned by the general objective function in~\cite{pmlr-v97-zhang19p}. The gray line in Figure~\ref{fig:generalobj:b} denotes the decision boundary learned by our proposed general objective function or by the general objective function in~\cite{pmlr-v97-zhang19p}.}
\label{fig:generalobj}
\end{figure}

\section{The Loss Functions for Other Variants of Adversarial Training}\label{loss_variants}
\begin{table*}[htp]
\caption{Loss functions of other variants of adversarial training.}
\vspace{-0.2 in}
\begin{center}
\begin{small}
\begin{sc}
\begin{tabular}{cc}
\toprule
 Methods & Loss Function  \\
\midrule
AT~\cite{Madry2017} & $L(f_{\theta}(X'),Y)$  \\
ALP~\cite{kannan2018adversarial} & $L(f_{\theta}(X'),Y)+\lambda \cdot \norm{f_{\theta}(X')-f_{\theta}(X)}$ \\
MMA~\cite{ding2019mma}& $L(f_{\theta}(X'),Y)\cdot \mathbf{1}(f_{\theta}(X)=Y)+L(f_{\theta}(X),Y)\cdot \mathbf{1}(f_{\theta}(X)\neq Y)$ \\
Trades~\cite{pmlr-v97-zhang19p} &$ L(f_{\theta}(X),Y)+\lambda \cdot \mathbf{KL}(P(Y|X')||P(Y|X))$\\
MART~\cite{wang2019improving} &$BCE(f_{\theta}(X'),Y)+\lambda \cdot \mathbf{KL}(P(Y|X')||P(Y|X))\cdot(1-P(Y=y|X))$\\
\bottomrule
\end{tabular}
\end{sc}
\end{small}
\end{center}
\vskip -0.1in
\end{table*}
\end{document}